\documentclass{article}

\usepackage{arxiv}

\usepackage[utf8]{inputenc}
\usepackage[T1]{fontenc}
\usepackage{url}
\usepackage{booktabs}
\usepackage{amsfonts}
\usepackage{amsmath,amssymb,amsthm}
\usepackage{nicefrac}
\usepackage{microtype}
\usepackage{graphicx}
\usepackage{natbib}
\usepackage{doi}
\usepackage{algorithm}
\usepackage{algorithmic}
\usepackage{float}
\usepackage{placeins}
\usepackage{multirow}
\usepackage{subcaption}
\usepackage{xcolor}
\usepackage{listings}
\usepackage[capitalize]{cleveref}
\usepackage{hyperref}
\usepackage{orcidlink}

\newcommand{\eg}{\textit{e.g.}}
\newcommand{\ie}{\textit{i.e.}}

\newtheorem{theorem}{Theorem}
\newtheorem{proposition}[theorem]{Proposition}
\newtheorem{corollary}[theorem]{Corollary}

\title{UrbanAlign: Post-hoc Semantic Calibration for VLM-Human Preference Alignment}

\author{
Yecheng Zhang\textsuperscript{1}\thanks{Corresponding: \texttt{zhangyec23@mails.tsinghua.edu.cn} (Y.~Zhang), \texttt{rong.zhao.25@ucl.ac.uk} (R.~Zhao), \texttt{230238514@seu.edu.cn} (C.~Shi)}\,\orcidlink{0000-0002-4151-2737}\quad
Rong Zhao\textsuperscript{2}\footnotemark[1]\quad
Zhizhou Sha\textsuperscript{1}\quad
Yong Li\textsuperscript{3}\quad
Lei Wang\textsuperscript{4}\quad
Ce Hou\textsuperscript{3}\quad
Wen Ji\textsuperscript{5} \\[2pt]
\bfseries Hao Huang\textsuperscript{1}\quad
Yunshan Wan\textsuperscript{6}\quad
Jian Yu\textsuperscript{7}\quad
Junhao Xia\textsuperscript{1}\quad
Yuru Zhang\textsuperscript{8}\quad
Chunlei Shi\textsuperscript{9}\footnotemark[1] \\[8pt]
\mdseries\textsuperscript{1}Tsinghua University\quad
\textsuperscript{2}University College London\quad
\textsuperscript{3}Hong Kong University of Science and Technology \\[2pt]
\mdseries\textsuperscript{4}Peking University\quad
\textsuperscript{5}Southwest Jiaotong University\quad
\textsuperscript{6}Zhejiang University \\[2pt]
\mdseries\textsuperscript{7}University of Texas at Austin\quad
\textsuperscript{8}Renmin University of China\quad
\textsuperscript{9}Southeast University
}

\renewcommand{\shorttitle}{UrbanAlign: Post-hoc Semantic Calibration for VLM-Human Preference Alignment}

\hypersetup{
hidelinks,
pdftitle={UrbanAlign: Post-hoc Semantic Calibration for VLM-Human Perception Alignment},
pdfauthor={Yecheng Zhang, Rong Zhao, Zhizhou Sha, Yong Li, Lei Wang, Ce Hou, Wen Ji, Hao Huang, Yunshan Wan, Jian Yu, Junhao Xia, Yuru Zhang, Chunlei Shi},
pdfkeywords={Human Preference Alignment, Post-hoc Concept Bottleneck, Vision-Language Model, Urban Perception},
}

\begin{document}
\maketitle

\begin{abstract}
Vision-language models (VLMs) can describe urban scenes in rich detail, yet consistently fail to produce reliable human preference labels in domain-specific tasks such as safety assessment and aesthetic evaluation.
The standard fix, fine-tuning or RLHF, requires large-scale annotations and model retraining.
We ask a different question: \emph{can a frozen VLM be aligned with human preferences without modifying any weights?}
Our key insight is that VLMs are strong concept extractors but poor decision calibrators.
We propose a three-stage post-hoc pipeline that exploits this asymmetry: (i)~interpretable evaluation dimensions are automatically mined from consensus exemplars; (ii)~an Observer--Debater--Judge chain extracts robust concept scores from the frozen VLM; and (iii)~locally-weighted ridge regression on a hybrid manifold calibrates these scores to human ratings.
Applied as \textbf{UrbanAlign} on Place Pulse~2.0, the framework reaches 72.2\% accuracy ($\kappa{=}0.45$) across six perception categories, outperforming all baselines by +11.0\,pp and zero-shot VLM by +15.5\,pp, with full interpretability and zero weight modification.
\end{abstract}

\keywords{Human Preference Alignment \and Post-hoc Concept Bottleneck \and Vision-Language Model \and Urban Perception}

\section{Introduction}
\label{sec:intro}

Large vision-language models (VLMs) excel at describing visual scenes but consistently fail to produce reliable preference labels for domain-specific tasks.
This \emph{VLM--human preference alignment gap} appears across domains, from urban perception~\citep{zhang2025genai,mushkani2025dovlms} to aesthetic quality~\citep{murray2012ava,wu2024qalign}, and the standard remedy is model adaptation: fine-tuning, LoRA, or RLHF~\citep{ouyang2022training}, all of which modify model weights and require domain-specific training data and non-trivial compute.
We ask a different question: \emph{can a frozen VLM be aligned with human preferences in a new domain, without modifying any model weights?}

The key observation is that VLMs are already strong \emph{concept extractors} that can identify rich visual elements in a scene (e.g., building facades, vegetation, and street infrastructure), but poor \emph{decision calibrators}: their mapping from these features to discrete comparison labels is poorly aligned with human judgement boundaries (\cref{fig:comparison}).
This mirrors the finding of Concept Bottleneck Models (CBMs)~\citep{koh2020concept} that routing predictions through interpretable concept scores and then a calibrated linear layer dramatically outperforms end-to-end classification.
Such a bottleneck can be retrofitted \emph{post-hoc} onto any frozen backbone~\citep{yuksekgonul2023posthoc}, and recent work further shows that concepts can be discovered directly from VLMs without manual labels~\citep{oikarinen2023labelfree,yang2023language}.

A na\"{i}ve alternative is to prompt the VLM as a single-shot scorer, but this suffers from well-documented biases: position effects, anchoring, and inconsistent self-evaluation~\citep{wu2024qalign,liu2024pairs}.
When asked to compare two scenes directly, a single VLM call conflates perception and judgement: the model must simultaneously identify relevant visual cues, weigh their relative importance, and produce a discrete label, all within a single forward pass.
This end-to-end compression discards the structured reasoning that human raters implicitly perform when evaluating complex, multi-dimensional scenes.
Multi-agent debate protocols decompose complex judgements into complementary reasoning steps and have been shown to mitigate such biases through structured disagreement~\citep{du2024improving,wang2023selfconsistency}, yet these techniques have primarily targeted factual reasoning and text generation; their potential for domain-specific visual preference alignment remains unexplored.

This principle is operationalised as a pipeline that keeps the VLM entirely frozen, with three tightly coupled stages (\cref{fig:framework}):
\textbf{(i)}~\emph{concept mining}, where interpretable evaluation dimensions are discovered by the VLM from consensus exemplars;
\textbf{(ii)}~\emph{structured scoring}, where an Observer--Debater--Judge multi-agent chain extracts robust continuous concept scores from the frozen VLM;
\textbf{(iii)}~\emph{geometric calibration}, where locally-weighted ridge regression (LWRR)~\citep{cleveland1979robust} on a hybrid visual--semantic manifold aligns concept scores to human ratings.
An end-to-end optimization loop unifies these stages, automatically selecting the dimension set that maximises calibrated accuracy via temperature-scheduled trials.
We instantiate this framework as \textbf{UrbanAlign} on Place Pulse~2.0~\citep{salesses2013collaborative}, a large-scale pairwise comparison dataset covering six categories of urban perception, where subjective judgements~\citep{wilson1982broken,jacobs1961death} defy the feature-to-label mappings of existing approaches~\citep{dubey2016deep,naik2014streetscore}.
UrbanAlign substantially outperforms both supervised baselines and zero-shot VLM scoring across all six categories, with full dimension-level interpretability and zero model-weight modification, demonstrating that a frozen VLM already contains sufficient perceptual knowledge for human-aligned urban assessment.

\smallskip\noindent\textbf{Contributions.}
\begin{enumerate}
\item \textbf{End-to-end concept mining.} We design a loop that automatically discovers interpretable evaluation dimensions from consensus exemplars and refines them via a temperature-scheduled search, forming a concept bottleneck for perception prediction.
\item \textbf{Multi-agent structured scoring.} We introduce an Observer--Debater--Judge deliberation chain that extracts robust continuous concept scores from a frozen VLM, reducing single-agent bias through structured disagreement.
\item \textbf{Local geometric calibration.} We propose locally-weighted ridge regression on a hybrid visual--semantic manifold that calibrates concept scores against human ratings with per-sample interpretability.
\end{enumerate}

\begin{figure}[!ht]
\centering
\includegraphics[width=\textwidth]{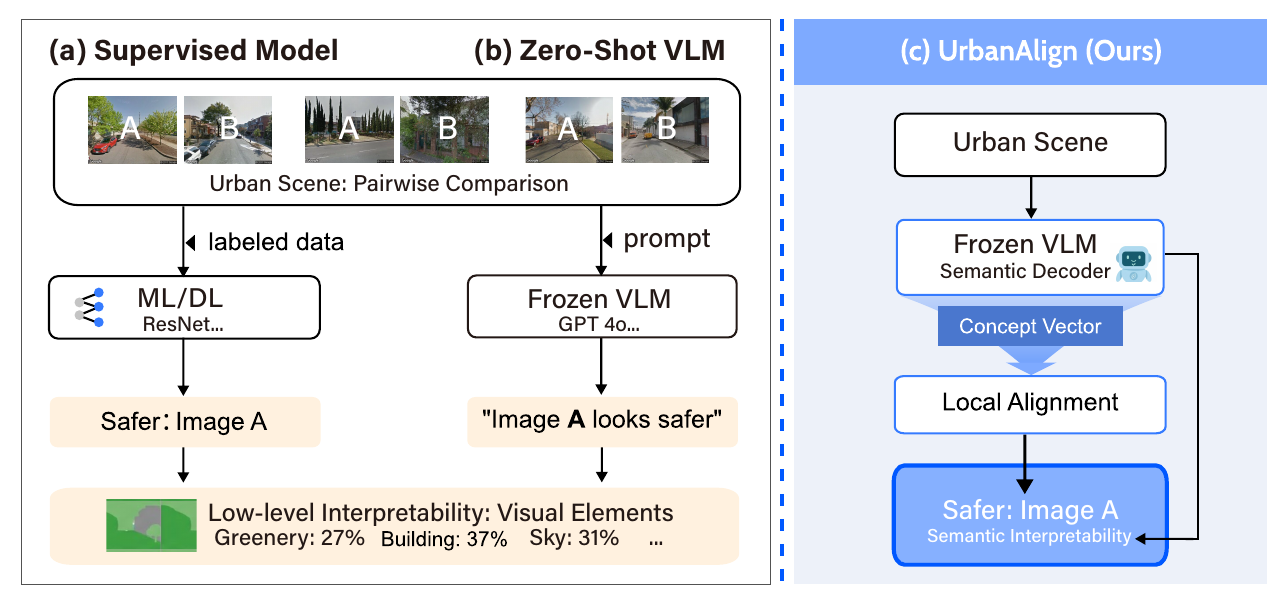}
\caption{Three approaches to urban perception: \textbf{(a)}~supervised baselines (e.g., Siamese networks, segmentation regression) require labelled data and lack interpretable intermediates; \textbf{(b)}~zero-shot VLM prompting is training-free but produces opaque, uncalibrated judgements; \textbf{(c)}~UrbanAlign routes predictions through VLM-discovered semantic dimensions for interpretable, locally calibrated comparison.}
\label{fig:comparison}
\end{figure}

\section{Related Work}
\label{sec:related}

\noindent\textbf{Urban perception and preference data.}
Pairwise comparison has a long history in psychometrics~\citep{thurstone1927law,bradley1952rank}.
Place Pulse collects over one million pairwise comparisons of street-view images across six perceptual dimensions~\citep{salesses2013collaborative}.
Siamese networks~\citep{dubey2016deep}, multi-task learning~\citep{zhang2018measuring}, and continuous score regression~\citep{naik2014streetscore} have been trained on these labels, but map directly from pixels to abstract percepts without interpretable mid-level representations.
Cross-cultural evaluations further show that such models transfer poorly to non-Western cities~\citep{pp2chinese2025}, motivating cost-effective local calibration rather than large-scale re-annotation.
Pairwise preference datasets now extend to 10-dimension urban perception~\citep{quintana2025specs} and photographic aesthetics~\citep{murray2012ava}, paralleling the learning-to-rank paradigm~\citep{joachims2002optimizing}, yet the resulting models remain black-box rankers.
Foundation models have been used to generate synthetic training corpora~\citep{ye2022zerogen,meng2023tuning}, but these operate on unimodal text with discrete labels and do not address the multi-dimensional, subjective nature of visual preference.
UrbanAlign instead synthesises \emph{visual} pairwise data via structured dimension scoring and post-hoc geometric calibration, requiring no weight modification.

\smallskip\noindent\textbf{VLM-based scoring and multi-agent reasoning.}
Systematic evaluation reveals that VLMs capture broad perceptual trends but oversimplify multi-dimensional human judgements~\citep{zhang2025genai}; benchmarking on urban perception confirms stronger alignment on objective properties (\eg, building density) than subjective ones (\eg, safety, boringness)~\citep{mushkani2025dovlms}.
Contrastive pretraining on satellite imagery~\citep{yan2024urbanclip} and zero-shot urban function inference~\citep{huang2024zeroshot} further demonstrate both the power and limits of VLMs for urban tasks; these studies position VLMs as end-to-end classifiers, whereas we reposition them as \emph{structured semantic decoders} whose outputs require post-hoc calibration.
Pairwise evaluation aligns better with human opinions than pointwise scoring~\citep{wu2024qalign,liu2024pairs}, yet single-shot scoring still suffers from position effects and anchoring biases.
Self-consistency sampling~\citep{wang2023selfconsistency} and multi-agent debate~\citep{du2024improving,liang2024encouraging,liu2025dmad} mitigate these issues for factual QA and mathematical reasoning, but applying structured deliberation to continuous visual scoring with domain-specific semantics remains unexplored.
UrbanAlign bridges this gap with an Observer--Debater--Judge chain that separates observation, argumentation, and judgement into distinct reasoning stages for pairwise dimension scoring.

\smallskip\noindent\textbf{VLM preference alignment.}
Recent VLM alignment methods can be grouped by whether they modify model weights.
RLHF-based approaches train reward models and fine-tune the VLM via policy optimisation~\citep{ouyang2022training}; extensions augment this pipeline with factual grounding~\citep{sun2023factually} and vision-guided reinforcement that eliminates human annotation entirely~\citep{zhan2025visionr1}.
Robust visual reward models leverage auxiliary textual preference data to improve reward quality under scarce visual annotations~\citep{wang2024rovrm}, while implicit preference mining from user-generated content broadens the training signal beyond curated labels~\citep{tan2025implicit}.
All these methods require modifying VLM weights.
A parallel line calibrates VLM confidence post-hoc via semantic perturbation at the token level~\citep{zhao2025calibration}, but targets per-query confidence rather than domain-specific preference alignment.
UrbanAlign occupies a distinct position: it requires \emph{zero weight modification}, instead routing predictions through an external concept bottleneck with local geometric calibration---combining the interpretability of CBMs with the locality of retrieval-augmented prediction.

\smallskip\noindent\textbf{Concept bottleneck models and post-hoc calibration.}
Modern neural networks are known to be poorly calibrated~\citep{guo2017calibration}; temperature scaling and Platt scaling partially address this, but domain-specific preference alignment requires richer structural remedies.
Concept Bottleneck Models (CBMs) route predictions through interpretable concept scores, enabling human-auditable reasoning and concept-level intervention~\citep{koh2020concept}.
Post-hoc retrofitting onto frozen backbones~\citep{yuksekgonul2023posthoc} and automated concept discovery from VLMs~\citep{oikarinen2023labelfree,yang2023language} eliminate manual concept engineering, making CBMs applicable without task-specific annotation of concept labels.
Our semantic dimensions serve as an analogous interpretable bottleneck, but operate \emph{externally} on VLM-synthesised data rather than as an internal network layer~\citep{srivastava2024vlgcbm,liu2025hybridcbm}: the VLM generates continuous concept scores, and a separate calibration step maps them to human-aligned predictions.

The calibration mechanism draws on locally-weighted regression~\citep{cleveland1979robust,atkeson1997locally,ruppert1994multivariate} and can be viewed as a visual analogue of retrieval-augmented generation (RAG)~\citep{lewis2020retrieval}: for each query pair, the system retrieves the $K$ most similar reference pairs from the hybrid manifold and locally fits a calibration function, adapting dimension weights to the manifold neighbourhood.
Unlike global linear probes or fixed concept weights, this local fitting allows the relative importance of dimensions to vary smoothly across the perceptual manifold, capturing the context-dependent nature of human preference.

\newpage
\section{Method}
\label{sec:method}

\begin{figure*}[t]
\centering
\includegraphics[width=\textwidth]{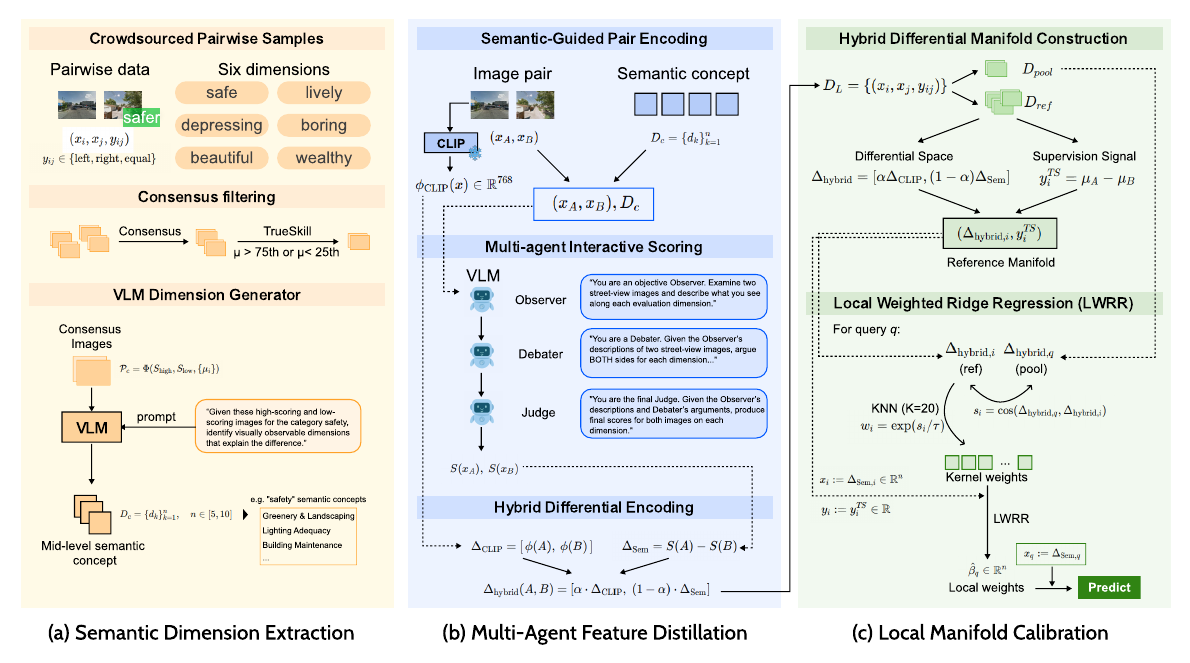}
\caption{UrbanAlign framework. Stage~1 mines semantic dimensions from consensus exemplars; Stage~2 distils dimension scores via Observer--Debater--Judge; Stage~3 calibrates via LWRR on the hybrid differential manifold. $\mathcal{D}_{\mathrm{ref}} \cap \mathcal{D}_{\mathrm{val}} \cap \mathcal{D}_{\mathrm{test}} = \varnothing$.}
\label{fig:framework}
\end{figure*}

\subsection{Problem Formulation}
\label{sec:formulation}

Given an urban image set $\mathcal{X}{=}\{x_1,\ldots,x_N\}$ and a perception category $d$ (\eg, wealthy), we aim to predict human pairwise preferences:
\begin{equation}
y_{ij} =
\begin{cases}
\text{left}  & \text{if } x_i \succ x_j \text{ on dimension } d,\\
\text{right} & \text{if } x_i \prec x_j,\\
\text{equal} & \text{if } x_i \approx x_j.
\end{cases}
\label{eq:pairwise}
\end{equation}
The labelled pool $\mathcal{D}_{L}{=}\{(x_i,x_j,y_{ij})\}$ contains crowdsourced pairwise comparisons (from Place Pulse~2.0).
We split $\mathcal{D}_{L}$ into a \emph{reference set} $\mathcal{D}_{\mathrm{ref}}$ (for LWRR calibration), a \emph{validation set} $\mathcal{D}_{\mathrm{val}}$ (for dimension and hyperparameter search), and a \emph{test set} $\mathcal{D}_{\mathrm{test}}$ (for final evaluation), with strict isolation: $\mathcal{D}_{\mathrm{ref}} \cap \mathcal{D}_{\mathrm{val}} \cap \mathcal{D}_{\mathrm{test}} = \varnothing$.

\noindent\textbf{Notation.}
$\phi{:}\,\mathcal{X}{\to}\mathbb{R}^{768}$ denotes a pre-trained CLIP encoder (ViT-L/14)~\citep{radford2021learning}.
TrueSkill~\citep{herbrich2006trueskill} converts pairwise comparisons to continuous ratings $\mu_i$ per image.

\subsection{Framework Overview}
\label{sec:overview}

The pipeline consists of three stages unified by an end-to-end optimization loop (\cref{fig:framework}).
\emph{Concept mining} (\cref{sec:stage1}) discovers interpretable evaluation dimensions from a handful of consensus exemplars and optimises them via a temperature-scheduled search.
\emph{Multi-agent structured scoring} (\cref{sec:stage2}) extracts continuous dimension scores from a frozen VLM through Observer--Debater--Judge deliberation, producing hybrid visual--semantic feature vectors.
\emph{Local manifold calibration} (\cref{sec:stage3}) aligns these scores to human ratings via locally-weighted ridge regression on the hybrid differential manifold.
Data isolation ($\mathcal{D}_{\mathrm{ref}} \cap \mathcal{D}_{\mathrm{val}} \cap \mathcal{D}_{\mathrm{test}} = \varnothing$) ensures that calibration, validation, and evaluation data never overlap.

\subsection{Concept Mining and Dimension Optimization (\cref{fig:framework}a)}
\label{sec:stage1}

\noindent\textbf{Motivation.}
Instead of asking a VLM ``Which image looks more wealthy?'' (a poorly calibrated end-to-end query), we first decompose the abstract percept into interpretable, continuously scorable sub-dimensions.
This is analogous to defining the concept set in a CBM~\citep{koh2020concept}: the dimensions form an interpretable bottleneck through which all subsequent predictions must pass.

\smallskip\noindent\textbf{TrueSkill consensus sampling.}
We convert pairwise comparisons to continuous scores via TrueSkill~\citep{herbrich2006trueskill}.
We sample \emph{high-consensus} ($\mu{>}\tau_{h},\;\sigma{<}\sigma_{\mathrm{med}}$) and \emph{low-consensus} ($\mu{<}\tau_{l},\;\sigma{<}\sigma_{\mathrm{med}}$) exemplars, where $\tau_{h}$ and $\tau_{l}$ are the 75th and 25th percentiles.
A small set per group forms the consensus set $\mathcal{S}_{\mathrm{consensus}}$.

\smallskip\noindent\textbf{Dimension extraction.}
A structured prompt presents the high/low exemplars with their TrueSkill scores and PCA-compressed CLIP embeddings.
The VLM outputs $n\in[5,10]$ dimensions in JSON, each containing a name, description, and high/low indicators (exact schema and prompts in supplementary \cref{sec:prompts}).
For the \emph{wealthy} category, the extracted dimensions are:
\emph{Fa\c{c}ade Quality}, \emph{Vegetation Maintenance}, \emph{Pavement Integrity}, \emph{Vehicle Quality}, \emph{Building Modernity}, \emph{Infrastructure Condition}, \emph{Street Cleanliness}, and \emph{Lighting Quality}.

These dimensions satisfy three requirements: \emph{visual observability} (scorable from street-view images), \emph{measurability} (1--10 continuous scale), and \emph{universality} (applicable across cities).

\smallskip\noindent\textbf{End-to-end dimension optimization.}
The dimensions discovered above depend on the VLM's sampling temperature and the particular consensus exemplars presented.
We close the loop with an automated search over dimension sets, optimising each perception category independently.
Let $\mathcal{D}_c^{(t)}$ denote the dimension set generated at trial $t$ for category $c$.
Each trial runs the full pipeline (dimension extraction $\to$ multi-agent scoring $\to$ LWRR calibration) on the held-out validation split $\mathcal{D}_{\mathrm{val}}$.
The per-category optimal set is:
\begin{equation}
\mathcal{D}_c^{*} = \arg\max_{t \in \{1,\ldots,T\}} \;\mathrm{Acc}_c\bigl(\text{LWRR}\bigl(\text{Score}(\mathcal{D}_{\mathrm{val}}^{c},\, \mathcal{D}_c^{(t)}),\, \mathcal{D}_{\mathrm{ref}}^{c}\bigr)\bigr).
\label{eq:e2e}
\end{equation}
The search proceeds in two phases:
\emph{Explore} (high $\tau_{\mathrm{gen}}$): diverse, independently generated dimension sets; each category accumulates its best.
\emph{Converge} (low $\tau_{\mathrm{gen}}$): per-category best dimensions are preserved except for 1--2 targeted replacements (\emph{mutation mode}), enabling local refinement.
Since categories may peak at different trials, the final output \emph{assembles} per-category optima: $\{\mathcal{D}_c^{*}\}_{c=1}^{C}$.

\subsection{Multi-Agent Structured Scoring (\cref{fig:framework}b)}
\label{sec:stage2}

\noindent\textbf{From classification to feature extraction.}
Rather than querying a VLM for a one-shot discrete label, we extract a \emph{hybrid feature vector}:
\begin{equation}
h(x) = [\phi_{\text{CLIP}}(x),\; S(x)],
\label{eq:hybrid}
\end{equation}
where $S(x)\in[1,10]^{n}$ is the vector of semantic dimension scores produced by the VLM, fusing low-level visual patterns with high-level domain semantics analogously to a post-hoc CBM~\citep{yuksekgonul2023posthoc}.

\smallskip\noindent\textbf{Observer--Debater--Judge deliberation.}
Inspired by multi-agent debate~\citep{du2024improving,liang2024encouraging} and self-consistency~\citep{wang2023selfconsistency}, we decompose reasoning into three sequentially chained agents:

\begin{itemize}
\item \textbf{Observer}: describes visually observable details per dimension, without judgement, thereby suppressing confirmation bias.
\item \textbf{Debater}: argues both for HIGH and LOW scores on each dimension, exploring opposing perspectives.
\item \textbf{Judge}: synthesises descriptions and arguments to produce final scores $S(x)\in[1,10]^{n}$.
\end{itemize}

\noindent\textbf{Variance reduction.}
The three-agent chain reduces dimension score variance from $\sigma_S^2$ to at most $\sigma_S^2/(1{+}2(1{-}\rho))$, where $\rho\in[0,1]$ is inter-agent correlation; when agents reason independently ($\rho{\to}0$), variance drops by up to $3\times$ (proof in \cref{sec:theory_multiagent}).

\smallskip\noindent\textbf{Intensity-based winner determination.}
For each pair $(x_A, x_B)$, we also record the \emph{intensity}: $I{=}|S(x_A) - S(x_B)|$ summed across dimensions.
Pairs with intensity below a threshold $\sigma_I$ are labelled ``equal''; otherwise the VLM's predicted winner is retained.

\subsection{Local Manifold Calibration (\cref{fig:framework}c)}
\label{sec:stage3}

This stage is the core algorithmic contribution: a post-hoc calibration layer analogous to the linear predictor in a CBM~\citep{koh2020concept}, but \emph{locally adaptive} to account for heterogeneous perception patterns across the visual-semantic manifold.

\subsubsection{Hybrid Differential Space.}

CLIP, pre-trained on web-scale image-text pairs, provides strong general features but exhibits distribution shift on street-view scenes.
We augment it with semantic dimension scores to form the hybrid differential encoding:
\begin{equation}
\Delta_{\text{hybrid}}(A,B)=\bigl[\alpha\cdot\overline{\Delta}_{\text{CLIP}}(A,B),\;(1{-}\alpha)\cdot\overline{\Delta}_{\text{Sem}}(A,B)\bigr],
\label{eq:hybrid_diff}
\end{equation}
where $\overline{\Delta}_{\text{CLIP}}$ uses L2-normalised CLIP embeddings, $\Delta_{\text{Sem}}{=}S(A){-}S(B)$ is normalised to $[0,1]$ by dividing by 10, and $\alpha\in[0,1]$ balances visual and semantic components.

\subsubsection{LWRR Algorithm.}
Our calibration extends locally-weighted regression~\citep{cleveland1979robust,atkeson1997locally,ruppert1994multivariate} to the hybrid differential space defined above.

\noindent\textbf{Step~1: Build reference manifold.}
From $\mathcal{D}_{\mathrm{ref}}$, compute hybrid differential vectors and TrueSkill score differences $y_i^{\text{TS}}{=}\mu_A{-}\mu_B$.
Mirror augmentation doubles the reference set: for each $(A,B,y)$, add $(B,A,-y)$.

\noindent\textbf{Step~2: Neighbourhood search.}
For query $\Delta_q^{\mathrm{hybrid}} \in \mathcal{D}_{\mathrm{test}}$, compute cosine similarities to all reference points and select the $K$ nearest neighbours $\mathcal{N}_K$.

\noindent\textbf{Step~3: Kernel weighting.}
$w_i=\exp(s_i/\tau)$ for $i\in\mathcal{N}_K$, where $s_i$ is cosine similarity and $\tau$ is kernel bandwidth.

\noindent\textbf{Step~4: Local ridge regression.}
Solve for local dimension weights:
\begin{equation}
\hat{\mathbf{w}} = \arg\min_{\mathbf{w}} \sum_{i\in\mathcal{N}_K} w_i\bigl(y_i^{\text{TS}}-\mathbf{w}^{\!\top}\Delta_{\text{Sem},i}\bigr)^{2} + \lambda\|\mathbf{w}\|^{2},
\label{eq:lwrr}
\end{equation}
with closed-form solution $\hat{\mathbf{w}}=(X^{\!\top}WX+\lambda I)^{-1}X^{\!\top}Wy$, where $W{=}\mathrm{diag}(w_1,\ldots,w_K)$ and $X$ stacks the semantic differentials.

\noindent\textbf{Step~5: Prediction.}
The calibrated score difference $\hat{\delta}_q = \hat{\mathbf{w}}^{\!\top}\Delta_{\text{Sem}}^{q}$; a local goodness-of-fit statistic provides per-prediction interpretability by measuring how much TrueSkill variance is explainable by the mid-level dimensions in each neighbourhood.

\noindent\textbf{Step~6: Statistical re-inference.}
\begin{equation}
\hat{y} =
\begin{cases}
\text{equal} & \text{if } |\hat{\delta}|<\varepsilon \text{ or } c_{\mathrm{eq}}>\theta, \\
\text{left}  & \text{if } \hat{\delta}>0, \\
\text{right} & \text{otherwise},
\end{cases}
\label{eq:reinfer}
\end{equation}
where $\varepsilon$ is a score-difference threshold and $c_{\mathrm{eq}}{=}|\{i\in\mathcal{N}_K: l_i{=}\text{equal}\}|/K$ measures local equal-consensus.

\begin{theorem}[Local Calibration Bound]\label{thm:calibration_main}
(Adapted from classical locally-weighted regression~\citep{cleveland1979robust,caponnetto2007optimal} to the hybrid VLM-concept manifold.)
Under the local model $y_i^{\mathrm{TS}} = w^{*\!}(q)^{\!\top} \Delta_{\mathrm{Sem},i} + \epsilon_i$ within $\mathcal{N}_K(q)$, with zero-mean noise of variance~$\sigma^2$, the LWRR estimator satisfies:
\begin{equation}
\mathbb{E}\bigl[\|\hat{w} - w^{*\!}(q)\|^2\bigr]
\;\leq\;
\frac{\sigma^2\,\mathrm{tr}(H_\lambda^{-1}X^{\!\top}W^2\!X\,H_\lambda^{-1})}{K}
\;+\;
\lambda^2\|w^{*}\|^2,
\label{eq:lwrr_bound_main}
\end{equation}
where $H_\lambda = X^{\!\top}\!W\!X + \lambda I$.
At optimal $\lambda^{*} \!=\! \sigma\sqrt{n/K}/\|w^{*}\|$, this simplifies to $O(\sigma\|w^{*}\|\sqrt{n/K})$.
The key implication for our setting is that with $n{=}5$--$8$ concept dimensions and $K{=}20$ neighbours, the system is well-overdetermined ($K \gg n$), ensuring small estimation variance; in contrast, a global model incurs irreducible bias from manifold heterogeneity (\cref{thm:local_global}).
\emph{Proof in \cref{sec:theory_calibration}.}
\end{theorem}

\noindent\textbf{Pipeline accuracy.}
The following decomposition provides intuitive guidance for why each pipeline stage is necessary, though it is approximate rather than a formal bound:
\begin{equation}
\mathrm{Acc} \;\approx\; \Phi\!\left(\frac{\|w^{*}\|\,\bar{\delta}}{\sqrt{\sigma_{\mathrm{LWRR}}^{2} + \sigma_{S}^{2}/\eta_{M}}}\right),
\label{eq:accuracy}
\end{equation}
where $\bar{\delta}$ is the mean signal strength (maximised by concept mining), $\sigma_S^2$ is dimension-score variance (reduced by multi-agent scoring with $\eta_M{=}1{+}2(1{-}\rho)$), and $\sigma_{\mathrm{LWRR}}^2{=}O(\sigma^2 n/K)$ is calibration error (minimised by LWRR).
Each pipeline stage tightens a specific term: Stage~1 maximises $\bar{\delta}$, Stage~2 minimises $\sigma_S^2/\eta_M$, and Stage~3 minimises $\sigma_{\mathrm{LWRR}}^2$; a formal end-to-end bound is given in \cref{cor:e2e}.
Full derivation in \cref{sec:theory}.

\smallskip\noindent\textbf{Complete algorithm.}
The full pseudocode is provided in \cref{sec:algorithm}.
Additional theoretical results (local vs.\ global calibration, hybrid space complementarity, E2E convergence) are in \cref{sec:theory}.

\newpage
\section{Experiments and Results}
\label{sec:exp}

\subsection{Setup}

\noindent\textbf{Dataset.}
Place Pulse 2.0~\citep{salesses2013collaborative}: 110,688 street-view images from 56 cities with 1,169,078 pairwise comparisons across six perception categories (safety, lively, beautiful, wealthy, depressing, boring).
We retain all pairs with ${\geq}2$ independent human votes (449--1,412 per category).
Results span \textbf{all six categories}; detailed ablation on \emph{wealthy}.

\smallskip\noindent\textbf{Data protocol.}
Each category is partitioned into $\mathcal{D}_{\mathrm{ref}}$ (60\%), $\mathcal{D}_{\mathrm{val}}$ (20\%), and $\mathcal{D}_{\mathrm{test}}$ (20\%); $\mathcal{D}_{\mathrm{ref}} \cap \mathcal{D}_{\mathrm{val}} \cap \mathcal{D}_{\mathrm{test}} = \varnothing$ (stratified split, seed\,=\,42).
$\mathcal{D}_{\mathrm{ref}}$ provides TrueSkill ratings for LWRR calibration; $\mathcal{D}_{\mathrm{val}}$ serves dimension search (\cref{eq:e2e}) and hyperparameter selection; $\mathcal{D}_{\mathrm{test}}$ is held out for final evaluation.
After excluding human ``equal'' labels, 83--260 test pairs per category remain.

\smallskip\noindent\textbf{Implementation.}
CLIP ViT-L/14 (768-d) for visual features; PCA to 8D for consensus prompts.
GPT-4o~\citep{openai2024gpt4o} as the default VLM; backbone-agnostic design is verified with Qwen2.5-VL-72B~\citep{bai2025qwen25vl} (\cref{sec:sensitivity}).
TrueSkill priors: $\mu{=}25$, $\sigma{=}8.33$, draw probability 0.10; 5 images per consensus group (10 total).
Stage~2: Observer $\tau{=}0.3$, Debater $\tau{=}0.5$, Judge $\tau{=}0.1$; single-shot modes $\tau{=}0.0$.
Stage~3 defaults: $K{=}20$, $\tau_{\mathrm{kernel}}{=}1.0$, $\lambda{=}1.0$, $\varepsilon{=}0.8$, $\theta{=}0.6$, $\alpha{=}0.3$.
E2E dimension search: $T{=}15$ trials with patience-based early stopping after 5 non-improving trials; explore phase $\tau_{\mathrm{gen}}{=}0.85{\to}1.0$, converge phase $\tau_{\mathrm{gen}}{=}0.7{\to}0.5$.
Total VLM cost: {\raise.17ex\hbox{$\scriptstyle\sim$}}\$68 for the experimental dataset (4,819 pairs, Mode~4); wall-clock time ${\sim}$2--4\,h with concurrent API access, no GPU required (\cref{sec:discussion}).
The main results (\cref{tab:main}) use \emph{per-category} optimised hyperparameters from a combined random search ($N{=}50{,}000$ configurations on $\mathcal{D}_{\mathrm{val}}$; \cref{sec:sensitivity_full}), as the optimal $K$ (10--50) and $\alpha$ (0.2--0.7) vary substantially across categories.

\smallskip\noindent\textbf{Baselines.}
\emph{ResNet-50 Siamese}: twin-tower network on frozen ResNet-50 features (2048-d $\to$ 512 $\to$ 128-d projection), concatenating $[\mathbf{z}_L,\mathbf{z}_R,|\mathbf{z}_L{-}\mathbf{z}_R|]$ for 3-way classification~\citep{koch2015siamese};
\emph{CLIP Siamese}: same twin-tower architecture on frozen CLIP ViT-L/14 features (768-d), representing a modern visual backbone;
\emph{Seg.\ + CLIP Regression}: SegFormer semantic segmentation (19 Cityscapes classes $\to$ 7 urban-element groups) pixel proportions concatenated with PCA-reduced CLIP features, difference vectors classified by Gradient Boosting~\citep{chen2018deeplabv3plus};
\emph{Zero-shot VLM}: GPT-4o with a forced binary-choice prompt ``Which looks more \texttt{<category>}?'' (no ``equal'' option), without semantic dimensions or multi-agent reasoning.
All supervised baselines are trained on the reference split with mirror augmentation (swapping left/right with flipped labels) and evaluated on the disjoint test split.

\smallskip\noindent\textbf{Metrics.}
Accuracy (Acc), Cohen's $\kappa$ (chance-corrected agreement), F1 (macro).
We report metrics both including and excluding pairs where the human label is ``equal'' (as these are inherently ambiguous).

\subsection{Main Results}

\Cref{tab:main} and \cref{fig:comparison} present the main comparison; \cref{fig:qualitative} provides per-pair qualitative evidence.
Key findings:

(1)~UrbanAlign (Mode~4 + LWRR) achieves 72.2\% accuracy ($\kappa{=}0.45$) on average across all six categories, with 81.6\% accuracy ($\kappa{=}0.64$) on \emph{safety} perception being the strongest.
This represents a \textbf{+11.0\,pp} gain over the strongest same-feature control (Global Ridge on $\Delta_{\mathrm{sem}}$: 61.2\%) and +15.5\,pp over zero-shot VLM (56.7\%).

(2)~Independent baselines cluster in 51--58\%; same-feature controls (BT Logistic, Global Ridge) trained on $\Delta_{\mathrm{sem}}$ plateau at 61.2\%, confirming that local calibration---not feature quality alone---is the key differentiator.
(3)~LWRR calibration provides +16.3\,pp average boost (55.9\%$\to$72.2\%), with the largest effect on \emph{boring} (+31.3\,pp) and \emph{depressing} (+25.3\,pp), where raw predictions are near chance.
LWRR succeeds because it re-weights dimensions \emph{locally}: a pair dominated by lighting cues receives different weights from one dominated by vegetation, capturing the context-dependent nature of human perception~\citep{lynch1960image}.
(4)~The gap between \emph{safety} (81.6\%) and \emph{boring} (70.2\%) reflects perceptual grounding: safety correlates with concrete cues~\citep{wilson1982broken,jeffery1971cpted}, while boringness depends on subtler absence-of-stimulation signals.
(5)~Human split-half agreement on Place Pulse ranges from 78--87\% across categories (\cref{sec:reliability}); on \emph{safety}, UrbanAlign matches this ceiling (81.8\% vs.\ 81.6\%), while on other categories a 9--18\,pp gap remains.
Bootstrap 95\% CIs for UrbanAlign accuracy are $\pm$5--10\,pp, reflecting moderate test-set sizes (83--260 pairs per category); all lower bounds exceed the 56.7\% zero-shot baseline.

\begin{table}[t]
\centering
\caption{Comparison on Place Pulse 2.0 across all six perception categories. All values reported as excl-equal\,(incl-equal). UrbanAlign outperforms all baselines including same-feature controls (72.2\% vs.\ 61.2\% best baseline).}
\label{tab:main}
\small
\resizebox{\columnwidth}{!}{%
\begin{tabular}{l c c c c c c}
\toprule
 & \multicolumn{2}{c}{Safety} & \multicolumn{2}{c}{Beautiful} & \multicolumn{2}{c}{Lively} \\
\cmidrule(lr){2-3} \cmidrule(lr){4-5} \cmidrule(lr){6-7}
Method & Acc.\,(\%) & $\kappa$ & Acc.\,(\%) & $\kappa$ & Acc.\,(\%) & $\kappa$ \\
\midrule
ResNet50 Siamese~\citep{koch2015siamese} & 54.8\,(54.0) & 0.16\,(0.18) & 58.6\,(56.0) & 0.18\,(0.19) & 45.7\,(42.5) & $-$0.07\,($-$0.06) \\
CLIP Siamese~\citep{radford2021learning} & 66.7\,(64.4) & 0.34\,(0.31) & 61.4\,(57.3) & 0.23\,(0.20) & 48.2\,(44.8) & $-$0.01\,($-$0.01) \\
Seg.\ Regression~\citep{chen2018deeplabv3plus} & 61.9\,(59.8) & 0.25\,(0.23) & 57.1\,(53.3) & 0.15\,(0.13) & 51.9\,(48.3) & 0.04\,(0.04) \\
GPT-4o zero-shot~\citep{openai2024gpt4o} & 61.9\,(59.8) & 0.23\,(0.21) & 62.9\,(58.7) & 0.25\,(0.21) & 49.4\,(46.0) & 0.01\,(0.01) \\
\addlinespace[2pt]
BT Logistic~\citep{bradley1952rank}\textsuperscript{\dag} & 65.5\,(63.2) & 0.31\,(0.29) & 58.6\,(54.7) & 0.17\,(0.15) & 56.8\,(52.9) & 0.17\,(0.15) \\
Global Ridge\textsuperscript{\dag} & 66.7\,(64.4) & 0.33\,(0.31) & 61.4\,(57.3) & 0.22\,(0.19) & 58.0\,(54.0) & 0.19\,(0.17) \\
\midrule
UrbanAlign (Ours) & \textbf{81.6}\,(76.9) & \textbf{0.64}\,(0.57) & \textbf{69.8}\,(66.7) & \textbf{0.38}\,(0.35) & \textbf{69.4}\,(65.4) & \textbf{0.40}\,(0.36) \\
\bottomrule
\end{tabular}%
}

\vspace{4pt}
\resizebox{\columnwidth}{!}{%
\begin{tabular}{l c c c c c c}
\toprule
 & \multicolumn{2}{c}{Wealthy} & \multicolumn{2}{c}{Boring} & \multicolumn{2}{c}{Depressing} \\
\cmidrule(lr){2-3} \cmidrule(lr){4-5} \cmidrule(lr){6-7}
Method & Acc.\,(\%) & $\kappa$ & Acc.\,(\%) & $\kappa$ & Acc.\,(\%) & $\kappa$ \\
\midrule
ResNet50 Siamese~\citep{koch2015siamese} & 47.0\,(44.8) & 0.01\,(0.00) & 48.7\,(45.0) & $-$0.00\,($-$0.02) & 53.3\,(50.0) & 0.06\,(0.05) \\
CLIP Siamese~\citep{radford2021learning} & 57.8\,(55.2) & 0.19\,(0.17) & 50.0\,(47.5) & 0.06\,(0.07) & 58.7\,(55.0) & 0.19\,(0.16) \\
Seg.\ Regression~\citep{chen2018deeplabv3plus} & 61.5\,(58.6) & 0.26\,(0.23) & 62.2\,(57.5) & 0.22\,(0.19) & 54.7\,(52.5) & 0.09\,(0.11) \\
GPT-4o zero-shot~\citep{openai2024gpt4o} & 60.2\,(57.5) & 0.21\,(0.19) & 48.7\,(45.0) & 0.06\,(0.05) & 57.3\,(53.8) & 0.14\,(0.12) \\
\addlinespace[2pt]
BT Logistic~\citep{bradley1952rank}\textsuperscript{\dag} & 68.7\,(65.5) & 0.38\,(0.35) & 50.0\,(46.3) & 0.01\,(0.01) & 58.7\,(55.0) & 0.17\,(0.15) \\
Global Ridge\textsuperscript{\dag} & 69.9\,(66.7) & 0.40\,(0.37) & 54.1\,(50.0) & 0.10\,(0.08) & 57.3\,(53.8) & 0.15\,(0.13) \\
\midrule
UrbanAlign (Ours) & \textbf{74.0}\,(71.2) & \textbf{0.48}\,(0.44) & \textbf{70.2}\,(68.8) & \textbf{0.41}\,(0.40) & \textbf{68.2}\,(62.5) & \textbf{0.37}\,(0.31) \\
\bottomrule
\end{tabular}%
}
\\[4pt]
\footnotesize{\textsuperscript{*}Excl-equal excludes ``equal'' labels; model ``equal'' predictions count as errors. $\mathcal{D}_{\mathrm{test}}$: 83--260 pairs/cat. Same GPT-4o backbone for UrbanAlign and zero-shot. \textsuperscript{\dag}Trained on UrbanAlign's $\Delta_{\mathrm{sem}}$ with $\mathcal{D}_{\mathrm{ref}}$ labels.}
\end{table}

\begin{table}[t]
\centering
\caption{Per-dimension discriminative power (Mode~4). ``Power'' = fraction (\%) of test pairs where the score difference on that single dimension correctly predicts the human winner (50\% = chance). $n$ = number of test pairs; ``avg'' = mean power across all dimensions in that category.}
\label{tab:dim_disc}
\resizebox{\columnwidth}{!}{
\begin{tabular}{lc|lc|lc}
\toprule
\multicolumn{2}{c|}{\textbf{Safety} ($n{=}283$, avg 64.7\%)} & \multicolumn{2}{c|}{\textbf{Beautiful} ($n{=}125$, avg 56.7\%)} & \multicolumn{2}{c}{\textbf{Lively} ($n{=}259$, avg 53.0\%)} \\
Dimension & Power & Dimension & Power & Dimension & Power \\
\midrule
Greenery \& Landscaping & 71.2 & Greenery \& Nat.\ Elem. & 64.4 & Human Presence & 59.6 \\
Lighting Adequacy & 71.2 & Color Coordination & 62.2 & Building Diversity & 57.7 \\
Building Maintenance & 69.2 & Pedestrian Amenities & 55.6 & Street Furniture & 57.7 \\
Pedestrian Infrastructure & 65.4 & Street Cleanliness & 55.6 & Color \& Lighting & 53.8 \\
Street Activity & 57.7 & Architectural Coherence & 55.6 & Commercial Activity & 53.8 \\
Visibility Clarity & 53.8 & Visual Complexity & 46.7 & Public Art \& Signage & 46.2 \\
 & & & & Vegetation \& Greenery & 42.3 \\
\midrule
\multicolumn{2}{c|}{\textbf{Wealthy} ($n{=}117$, avg 68.0\%)} & \multicolumn{2}{c|}{\textbf{Boring} ($n{=}91$, avg 43.8\%)} & \multicolumn{2}{c}{\textbf{Depressing} ($n{=}95$, avg 42.1\%)} \\
Dimension & Power & Dimension & Power & Dimension & Power \\
\midrule
Fa\c{c}ade Quality & 71.2 & Activity Presence & 52.1 & Street Cleanliness & 47.9 \\
Street Cleanliness & 71.2 & Color Vibrancy & 50.0 & Visual Clutter & 45.8 \\
Pavement Integrity & 71.2 & Street Furn.\ Presence & 47.9 & Fa\c{c}ade Condition & 43.8 \\
Vegetation Maintenance & 69.2 & Architectural Diversity & 43.8 & Lighting Quality & 39.6 \\
Lighting Quality & 67.3 & Visual Complexity & 39.6 & Greenery Presence & 33.3 \\
Infrastructure Condition & 67.3 & Spatial Enclosure & 37.5 & & \\
Vehicle Quality & 65.4 & Vegetation Variety & 35.4 & & \\
Building Modernity & 61.5 & & & & \\
\bottomrule
\end{tabular}}
\\[2pt]
\footnotesize{Stage~1 generates 5--10 dimensions per category. The E2E loop (\cref{sec:stage1}) explores alternative sets over 15 trials (12 before early stopping).}
\end{table}

\subsection{Dimension-Level Discriminability}

\Cref{tab:dim_disc} reports per-dimension discriminative power across all six categories.
Discriminability varies substantially: \emph{wealthy} (avg.\ 68.0\%) and \emph{safety} (64.7\%) are highest, while \emph{depressing} (42.1\%) is weakest, suggesting that ``depressingness'' is harder to decompose into visually scorable sub-dimensions.
For \emph{wealthy}, \emph{Fa\c{c}ade Quality}, \emph{Street Cleanliness}, and \emph{Pavement Integrity} all reach 71.2\%, consistent with broken-windows theory that maintenance quality signals affluence~\citep{wilson1982broken}.
LWRR's aggregate accuracy (74.0\%) exceeds any single dimension, validating the multi-dimensional concept bottleneck: the local weight combination captures complementary information that no individual dimension alone provides.

\smallskip\noindent\textbf{Cross-category patterns.}
A clear hierarchy emerges from \cref{tab:dim_disc}: categories dominated by \emph{physical-maintenance} cues (\emph{wealthy}, \emph{safety}) are easiest to decompose, while categories driven by \emph{atmospheric-affective} cues (\emph{boring}, \emph{depressing}) are hardest.
This aligns with environmental psychology: maintenance-related features (fa\c{c}ade condition, pavement quality, lighting) produce strong inter-rater agreement because they are visually unambiguous~\citep{wilson1982broken,jeffery1971cpted}, whereas affective qualities like ``boringness'' depend on subjective thresholds that vary across individuals~\citep{quintana2025specs}.
Within \emph{safety}, the top two dimensions, Greenery \& Landscaping (71.2\%) and Lighting Adequacy (71.2\%), correspond precisely to two pillars of Crime Prevention Through Environmental Design (CPTED)~\citep{jeffery1971cpted}: natural surveillance through visibility and territorial reinforcement through maintained green space.
This suggests that VLM-discovered dimensions recover established domain-specific constructs without any expert supervision.

\smallskip\noindent\textbf{Complementarity and the concept bottleneck.}
The gap between the best single-dimension power and LWRR's aggregate accuracy quantifies complementarity among dimensions.
For \emph{wealthy}, the best single dimension (71.2\%) is 2.8\,pp below LWRR (74.0\%), indicating that the remaining dimensions contribute non-redundant signal; e.g., \emph{Vehicle Quality} (65.4\%) captures cues orthogonal to \emph{Fa\c{c}ade Quality}.
For \emph{safety}, the complementarity gap widens to 16.9\,pp (64.7\% avg $\to$ 81.6\% aligned), reflecting the multifactorial nature of perceived safety: no single dimension suffices, but the locally-weighted combination is highly predictive.
This validates the concept-bottleneck design: the interpretable dimension layer does not sacrifice accuracy relative to opaque end-to-end scoring; instead, it \emph{enables} calibration gains that would be impossible without structured intermediates.

\smallskip\noindent\textbf{Interpretability.}
For a high-confidence pair, the LWRR weights reveal Building Modernity ($w{=}2.30$), Vegetation Maintenance ($w{=}1.72$), and Street Cleanliness ($w{=}1.35$) as top contributors, directly interpretable and actionable for urban planners.
In contrast, end-to-end baselines (C0--C2) produce only a scalar prediction; a planner cannot extract which visual features drive the assessment.
Low-confidence predictions exhibit near-uniform weights, signalling genuine perceptual ambiguity rather than model failure---a transparency property absent from end-to-end alternatives.
Per-sample weight variation (supplementary \cref{tab:weight_variation}) shows that LWRR adapts to local context: in residential scenes, \emph{Vegetation Maintenance} receives higher weight, while in commercial areas, \emph{Infrastructure Condition} dominates, reflecting the heterogeneous determinants of ``wealthiness'' across urban typologies.

\smallskip\noindent\textbf{Qualitative analysis.}
\Cref{fig:qualitative} visualises UrbanAlign's per-pair reasoning across all six categories.
Each example shows dimension-score profiles (centre) and baseline confidence bars (right).
Dimension-score profiles show clear separation on the most discriminative dimensions, consistent with the power ranking in \cref{tab:dim_disc}: for \emph{wealthy}, Fa\c{c}ade Quality and Street Cleanliness exhibit the largest between-image gaps, while for \emph{boring}, dimensions cluster near parity, reflecting the lower discriminability reported in the table.

The ``Corrected'' cases demonstrate LWRR's calibration value.
When raw concept scores narrowly favour the wrong image due to a single misleading dimension, local re-weighting overturns the prediction to match human judgement.
For instance, in a \emph{wealthy} pair, the raw aggregation favours Image~B because of a higher Building Modernity rating; however, LWRR assigns large positive weight to Street Cleanliness and Vegetation Maintenance, dimensions where Image~A clearly dominates, flipping the prediction to agree with the human annotator.
This correction mechanism operationalises the local weighting in Stage~3: dimensions that are locally discriminative in the manifold neighbourhood receive disproportionate influence, suppressing globally noisy dimensions on a per-prediction basis.

The baseline confidence bars (right panel) further highlight the structural advantage.
All four baselines show near-equal confidence for the \emph{boring} and \emph{depressing} pairs, the same categories where \cref{tab:dim_disc} reports lowest discriminability, because holistic scoring cannot separate the subtle atmospheric cues that distinguish these images.
UrbanAlign, by contrast, routes the prediction through the dimension bottleneck and applies local calibration, producing a clear margin even for these difficult categories.
This explains the outsized LWRR gains on \emph{boring} (+31.3\,pp) and \emph{depressing} (+25.3\,pp) reported in \cref{tab:vrm_gain_main}: the structured intermediates give calibration a lever that holistic methods lack.

\begin{figure*}[t]
\centering
\includegraphics[width=\textwidth]{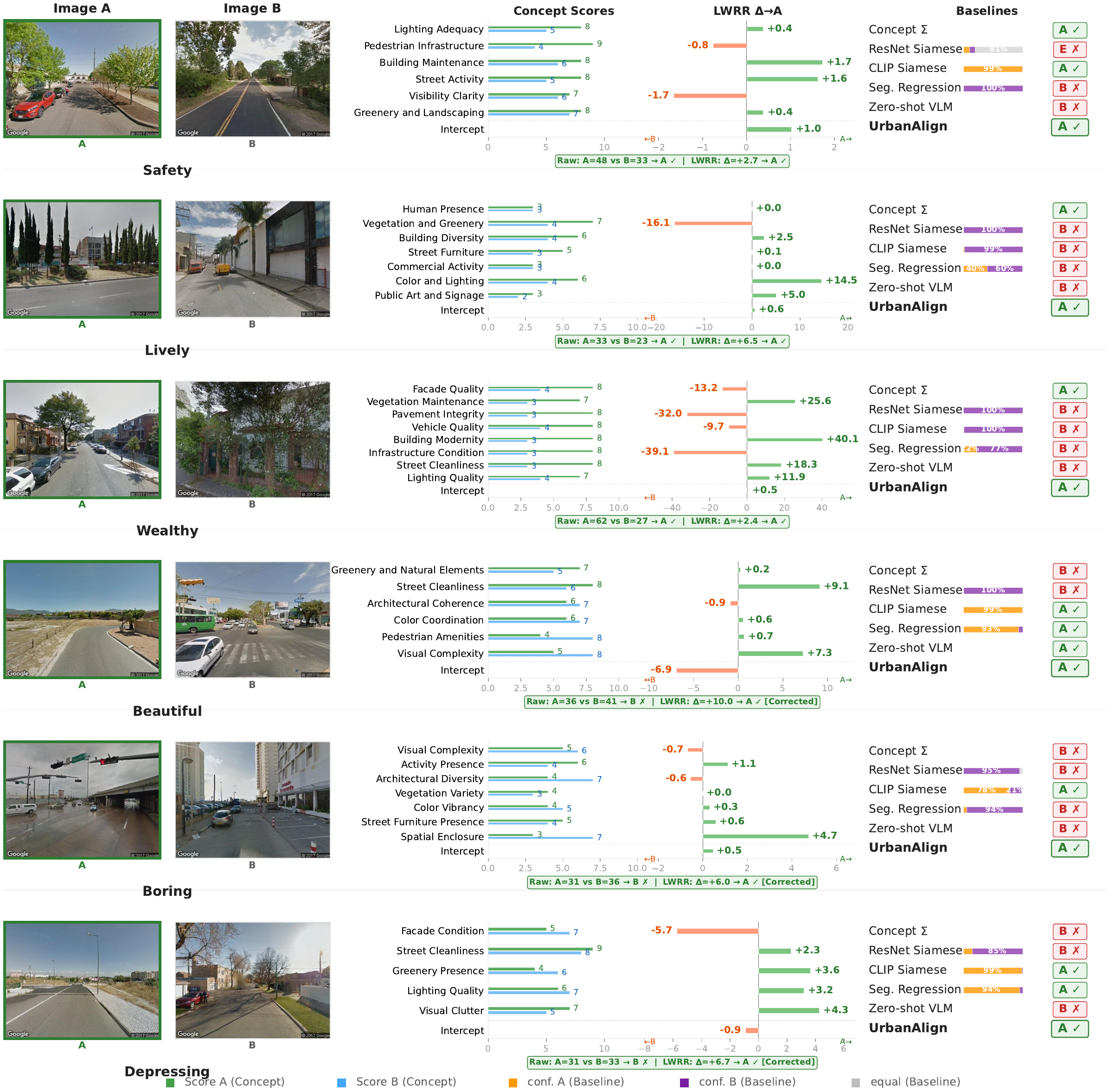}
\caption{Qualitative examples across six perception categories. \textbf{A}~denotes the human-selected winner (green border). \emph{Centre chart}: per-dimension concept scores for image~A (\textcolor{green!60!black}{green}) and image~B (\textcolor{blue!70!black}{blue}) on the left half, and LWRR contribution towards~A on the right half; the $y$-axis lists dimension names and bar length represents each dimension's contribution to $\hat{\delta}_q$ (\cref{eq:lwrr}); positive LWRR bars support~A, negative bars support~B. The summary line reports the raw score comparison and calibrated LWRR delta; ``Corrected'' marks cases where calibration overturns an incorrect raw prediction. \emph{Right chart}: baseline confidence bars (\textcolor{orange}{orange}~=~conf.\,A, \textcolor{violet}{purple}~=~conf.\,B, \textcolor{gray}{grey}~=~equal); UrbanAlign is shown in bold.}
\label{fig:qualitative}
\end{figure*}
\FloatBarrier

\subsection{Ablation Study}

\noindent\textbf{Component analysis.}
The zero-shot VLM baseline (C3, 56.7\%, \cref{tab:main}) issues holistic judgements without interpretable intermediates.
Replacing holistic prompting with structured dimension scoring (Mode~2 raw, 55.4\%) yields comparable raw accuracy while creating a concept bottleneck: each prediction decomposes into 5--8 dimension-level scores that expose the reasoning basis.
Adding multi-agent deliberation (Mode~4 raw, 55.9\%) further reduces dimension-score variance, producing more consistent inputs for calibration.
LWRR then leverages this structured, low-variance bottleneck for a +16.3\,pp gain (72.2\%, \cref{tab:vrm_gain_main}): locally-adapted dimension weights capture complementary information that no holistic prompt can access.
Each component is \emph{necessary}: without dimensions, there is nothing to calibrate; without multi-agent scoring, calibration inputs are noisy; without LWRR, the raw bottleneck does not outperform holistic scoring.
The E2E dimension search (\cref{sec:sensitivity}) confirms the default Stage~1 dimensions are already near-optimal (69.3\% per-category assembly vs.\ 72.2\% with original dimensions and optimised LWRR).

\noindent\textbf{Note on dimensions and hyperparameters.}
\Cref{tab:main} reports results using the \emph{original} Stage-1 dimensions with per-category \emph{optimised calibration hyperparameters} (from a combined random search on $\mathcal{D}_{\mathrm{val}}$; \cref{sec:sensitivity_full}).
The E2E dimension search (\cref{eq:e2e}) optimises only the \emph{dimension set} while holding calibration hyperparameters at defaults, achieving 69.3\%.
These two optimisation axes are complementary and have not yet been combined, leaving room for further improvement.

\smallskip\noindent\textbf{$\mathbf{2{\times}2}$ factorial design.}
We isolate pairwise context and multi-agent reasoning via a $2{\times}2$ factorial (\cref{tab:factorial}, averaged across all six categories, post-LWRR; per-category breakdown in \cref{tab:factorial_all}).
\textbf{Multi-agent reasoning} consistently improves accuracy: +5.0\,pp on single-image inputs and +20.3\,pp on pairwise inputs on average.
\textbf{Pairwise context} alone yields only a marginal +1.0\,pp (Mode~1$\to$2), as without deliberation dual inputs introduce noise in some categories (\eg, \emph{wealthy}: $-$13.2\,pp, \emph{lively}: $-$7.4\,pp); however, it strongly \emph{amplifies} multi-agent reasoning (+16.3\,pp, Mode~3$\to$4), as comparative anchors fuel constructive debate.
The two factors interact \textbf{synergistically}: Mode~4 exceeds Mode~1 by +21.3\,pp on average, far exceeding the sum of individual effects (+6.0\,pp).
This synergy is consistent across all six categories (\cref{tab:factorial_all}), with the strongest interaction on \emph{safety} (+36.4\,pp) and the weakest on \emph{lively} (+11.4\,pp).

\smallskip\noindent\textbf{LWRR alignment gain.}
\Cref{tab:vrm_gain_main} shows LWRR calibration gain across all six categories for Mode~4.
LWRR yields a +16.3\,pp average improvement (55.9\%$\to$72.2\%), with the most dramatic effect on \emph{boring} (+31.3\,pp) and \emph{depressing} (+25.3\,pp), categories where raw VLM predictions are near chance (38.9\% and 42.9\%).
These subjective categories suffer most from uniform dimension weighting: LWRR rescues them by adapting weights to local manifold geometry, e.g., upweighting \emph{Street Cleanliness} in residential scenes while emphasising \emph{Activity Presence} in commercial areas.
LWRR improves all six categories, confirming that post-hoc geometric calibration is consistently beneficial across diverse perception tasks (full per-mode breakdown in \cref{sec:vrm_gain}).

\begin{table}[H]
\centering
\caption{$2{\times}2$ factorial averaged across all six categories (accuracy\%\,/\,$\kappa$, post-LWRR). Per-category results in \cref{tab:factorial_all}.}
\label{tab:factorial}
\small
\begin{tabular}{l|cc|c}
\toprule
 & Single-Shot & Multi-Agent & Multi-Agent Gain \\
\midrule
Single-Image & 50.9\,/\,0.08 & 55.9\,/\,0.18 & +5.0 \\
Pairwise     & 51.9\,/\,0.10 & \textbf{72.2}\,/\,\textbf{0.45} & +20.3 \\
\midrule
Pairwise Gain & +1.0 & +16.3 & --- \\
\bottomrule
\end{tabular}
\end{table}

\begin{table}[H]
\centering
\caption{LWRR calibration gain (Mode~4, accuracy \% excl.\ ``equal'') across all six categories. LWRR improves all six categories; the largest gain is on \emph{boring} (+31.3\,pp). Average gain: +16.3\,pp.}
\label{tab:vrm_gain_main}
\small
\begin{tabular}{lcccccc|c}
\toprule
 & Safety & Beaut. & Lively & Wealthy & Boring & Depress. & Avg \\
\midrule
Raw Acc.\ (\%)     & 68.4 & 63.9 & 58.5 & 62.9 & 38.9 & 42.9 & 55.9 \\
Aligned Acc.\ (\%) & 81.6 & 69.8 & 69.4 & 74.0 & 70.2 & 68.2 & 72.2 \\
$\Delta$ (pp)      & \textbf{+13.3} & \textbf{+5.9} & \textbf{+10.9} & \textbf{+11.1} & \textbf{+31.3} & \textbf{+25.3} & \textbf{+16.3} \\
\midrule
Raw $\kappa$       & 0.38 & 0.29 & 0.17 & 0.26 & $-$0.22 & $-$0.13 & 0.13 \\
Aligned $\kappa$   & 0.64 & 0.38 & 0.40 & 0.48 & 0.41 & 0.37 & 0.45 \\
\midrule
$n$ (test pairs)   & 260 & 115 & 238 & 107 & 83 & 87 & --- \\
\bottomrule
\end{tabular}
\end{table}

\smallskip\noindent\textbf{Why does the pipeline outperform end-to-end alternatives?}
Accuracy (\cref{eq:accuracy}) decomposes into three terms, each targeted by a stage: concept mining maximises signal strength~$\bar{\delta}$ (avg.\ 54.7\%; \cref{tab:dim_disc}); multi-agent scoring reduces $\sigma_{S}^{2}$; LWRR minimises calibration error by fitting weights \emph{locally}.
The hybrid space ($\alpha{=}0.3$) is critical: CLIP provides neighbourhood structure while semantic scores supply domain signals; global ridge on hybrid features (57.8\%) underperforms semantic-only (61.2\%), confirming this role separation.

\smallskip\noindent\textbf{Interpretability and qualitative analysis.}
For a high-confidence pair, LWRR weights reveal Building Modernity ($w{=}2.30$), Vegetation Maintenance ($w{=}1.72$), and Street Cleanliness ($w{=}1.35$) as top contributors---directly actionable for urban planners.
\Cref{fig:qualitative} visualises per-pair reasoning across all six categories; ``Corrected'' cases show LWRR overturning incorrect raw predictions via local re-weighting.

\subsection{Sensitivity Analysis}
\label{sec:sensitivity}

We study the sensitivity of Stage~3 to key hyperparameters (full per-parameter sweep grids are provided in \cref{sec:sensitivity_full}).
All experiments vary one parameter while holding others at default.

\noindent\textbf{SELECTION\_RATIO} shows moderate impact: reducing from 1.0 to 0.6 yields a +0.6\,pp average gain (58.4\%$\to$59.0\%), with \emph{safety} showing the largest individual improvement (+8.8\,pp to 75.5\%).
The effect is category-dependent, confirming that confidence-based filtering benefits categories with higher raw signal quality.

\noindent\textbf{$K_{\max}$} is the most sensitive parameter: $K{=}10$ collapses the average to 54.1\% while $K{=}20$ achieves 58.4\%, a +4.3\,pp gap.
Per-category variation reveals that $K{=}20$ is best on average, though individual categories sometimes favour other values (\eg, \emph{safety} peaks at $K{=}30$ with 70.3\%).

\noindent\textbf{$\alpha$ (CLIP vs.\ semantic weight)} shows a clear average optimum at $\alpha{=}0.3$ (70\% semantic weight), though per-category optima differ: \emph{beautiful} and \emph{depressing} favour $\alpha{=}0.7$ (60.3\% and 62.5\% respectively), suggesting that CLIP features carry more useful information for aesthetics-related categories.
This parallels the CBM finding that concept-based predictions can match or exceed end-to-end models when concepts are well-chosen~\citep{koh2020concept}.

\noindent\textbf{$\tau_{\mathrm{kernel}}$ and $\lambda_{\mathrm{ridge}}$} have limited effect ($<$2.5\,pp variation).
This is theoretically expected: for locally-weighted ridge regression with $K$ neighbours in $n$ dimensions, the excess risk scales as $\mathcal{O}(\lambda + n/(K\lambda))$~\citep{caponnetto2007optimal}, which becomes insensitive to $\lambda$ when $K \gg n$ (here $K{=}20$, $n{=}5$--$8$).

\noindent\textbf{Scoring thresholds $\varepsilon$ and $\theta$} (\cref{eq:reinfer}) are highly robust: $\theta$ (equal-consensus fraction) has zero effect across all six categories, and $\varepsilon$ (score-difference threshold) only degrades \emph{lively} at extreme values (${\geq}1.5$), leaving all other categories unchanged.
This insensitivity is expected: LWRR already captures the ``equal'' signal implicitly through small predicted deltas, so explicit equal-handling thresholds are largely redundant.

\smallskip\noindent\textbf{Dimension optimization.}
As described in \cref{sec:stage1}, we automate dimension-set selection via the two-phase E2E optimization loop (\cref{eq:e2e}).
With $T{=}15$ trials across all six categories (12 completed before early stopping at patience${=}5$), per-category independent assembly yields \textbf{69.3\%} average accuracy ($\kappa{=}0.454$) using default calibration hyperparameters, a \textbf{+10.7\,pp} gain over the single best trial (58.6\%).
\emph{Wealthy} (77.8\%, $\kappa{=}0.579$) and \emph{safety} (76.5\%, $\kappa{=}0.534$) benefit most from optimization; \emph{lively} (58.8\%, $\kappa{=}0.308$) remains the most challenging.
The explore phase ($\tau_{\mathrm{gen}}{=}0.85$--$1.0$) discovers optimal dimensions for 4/6 categories (safety, beautiful, wealthy, depressing); the converge phase ($\tau_{\mathrm{gen}}{=}0.7$--$0.5$) refines \emph{lively} and \emph{boring} via targeted mutation, confirming that both phases contribute complementary value.
The +10.7\,pp assembly gain over the single-trial best demonstrates that different categories have distinct optimal dimension sets, vindicating per-category independent optimization.
The main pipeline result (72.2\%) uses per-category optimised calibration hyperparameters (\cref{sec:sensitivity_full}) with the original dimension set, while E2E achieves 69.3\% through dimension selection alone; the two optimisation axes are complementary.
Full trial-by-trial results and per-category dimension lists are reported in \cref{sec:e2e}.

\smallskip\noindent\textbf{Backbone generalization.}
To verify that the framework does not depend on the proprietary GPT-4o backbone, we replace it with the open-source Qwen2.5-VL-72B-Instruct~\citep{bai2025qwen25vl} (72B parameters, Apache-2.0 licence) and re-run the full pipeline with identical dimensions, sampling, and data splits, re-optimising only the Stage~3 calibration hyperparameters.
Raw scoring accuracy is within 1.7\,pp of GPT-4o (54.2\% vs.\ 55.9\%), confirming comparable perceptual judgement from the open-source model.
After LWRR calibration, Qwen achieves 72.5\% average accuracy ($\kappa{=}0.45$), matching GPT-4o (72.2\%, $\kappa{=}0.45$) within 0.3\,pp, with all six categories staying within $\pm$3.1\,pp (\cref{tab:backbone}).
This demonstrates that the locally-adaptive calibration stage absorbs backbone-specific scoring biases, making the framework fully reproducible with open-source models.

\section{Conclusion}
\label{sec:conclusion}

We presented \textbf{UrbanAlign}, a post-hoc calibration framework that transforms frozen VLMs from unreliable annotators (56.7\%) into structured perception decoders (72.2\%, $\kappa{=}0.45$) via interpretable semantic dimensions and locally-adaptive geometric calibration, without modifying any model weights.
The framework combines three mutually reinforcing components: automatic concept mining from consensus exemplars~\citep{koh2020concept}, multi-agent deliberation for robust concept scoring~\citep{du2024improving}, and locally-weighted ridge regression on a hybrid visual--semantic manifold for per-sample calibration.
Experiments on Place Pulse~2.0 show that this combination achieves alignment levels that neither supervised baselines nor holistic VLM prompting can reach, with the largest gains on near-chance categories (\emph{boring}: +31.3\,pp, \emph{depressing}: +25.3\,pp), demonstrating that calibration is most valuable precisely where it is most needed.
The dimension-level LWRR weights are directly interpretable and actionable, a property absent from all end-to-end alternatives.

\smallskip\noindent\textbf{Future directions.}
Natural next steps include extending UrbanAlign to cross-cultural settings~\citep{pp2chinese2025,quintana2025specs} where demographic and personality factors modulate perception, and generalising the pipeline to other pairwise-preference domains such as aesthetic quality and architectural design.
Adaptive dimension batching could further reduce scoring cost at scale, and exploring stronger VLM backbones may close the remaining gap on abstract categories.
Code and data splits will be released upon acceptance.
Extended discussion (why post-hoc calibration works, local vs.\ global fitting, information-bottleneck connection, scope, limitations, ethics, and computational cost) is provided in \cref{sec:discussion}.

\section*{Acknowledgements}
The authors thank the anonymous reviewers for their constructive feedback.

\newpage
\appendix
\raggedbottom
\section*{Supplementary Material}
\addcontentsline{toc}{section}{Supplementary Material}

This supplementary material provides additional details that support the main paper.
\Cref{sec:dataset} describes the dataset filtering protocol and data-split statistics.
\Cref{sec:lwrr_weights} presents per-pair LWRR weight variation analysis.
\Cref{sec:sensitivity_full} reports full parameter-sweep grids expanding on the main-text sensitivity analysis.
\Cref{sec:prompts} reproduces the exact prompt templates used for multi-agent feature distillation.
\Cref{sec:theory} establishes the mathematical foundations for post-hoc calibration, local fitting, multi-agent variance reduction, and hybrid feature complementarity.
\Cref{sec:e2e} details the end-to-end dimension optimization procedure with trial-by-trial results across all six categories and a full API cost breakdown.
\Cref{sec:vrm_gain} reports per-mode LWRR alignment gains.
\Cref{sec:algorithm} provides the complete algorithm pseudocode.
\Cref{sec:preference_datasets} surveys related human preference datasets for potential transfer.
\Cref{sec:discussion} contains the full discussion: why post-hoc calibration works, local vs.\ global fitting, information-bottleneck connection, scope, limitations, ethics, and computational cost.
\Cref{sec:backbone} reports open-source backbone generalization results with Qwen2.5-VL-72B.
\Cref{sec:reliability} provides bootstrap confidence intervals and human inter-annotator agreement analysis.

\section{Dataset and Filtering Protocol}
\label{sec:dataset}

\subsection{Place Pulse 2.0 Overview}

Place Pulse~2.0~\citep{salesses2013collaborative} contains 1,169,078 pairwise comparisons over 110,688 Google Street View images from 56 cities worldwide.
Each comparison asks: ``Which place looks more \texttt{<category>}?'' for six perception categories: \emph{safety}, \emph{lively}, \emph{beautiful}, \emph{wealthy}, \emph{depressing}, and \emph{boring}.
Each comparison yields one of three outcomes: \emph{left}, \emph{right}, or \emph{equal}.

\subsection{Consensus Filtering ($N \geq 2$)}

Raw Place Pulse data contains many image pairs with only a single vote, which carries high individual noise.
We apply a consensus threshold: only pairs with $N \geq 2$ independent votes and a strict majority ($> 50\%$ agreement on the winning label) are retained.
\Cref{tab:consensus} reports the full filtering statistics across all six categories.

\begin{table}[H]
\centering
\caption{Dataset filtering pipeline across all six perception categories. Raw pairs are unique image comparisons in Place Pulse~2.0. $N{\geq}2$ retains only pairs with at least two independent votes. Consensus (\%) is the average majority-vote agreement among retained pairs. All retained pairs are used in the experiments. Label distribution is by majority vote.}
\label{tab:consensus}
\small
\setlength{\tabcolsep}{3.5pt}
\begin{tabular}{lrrrrrr}
\toprule
Category & Raw Pairs & $N{\geq}2$ & Cons.\ (\%) & Left & Right & Equal \\
\midrule
Safety      & 360,095 & 1,412 & 87.6 & 682 & 652 & 78 \\
Beautiful   & 171,858 &   618 & 90.0 & 325 & 266 & 27 \\
Lively      & 260,428 & 1,288 & 85.8 & 595 & 613 & 80 \\
Wealthy     & 148,739 &   581 & 90.9 & 264 & 274 & 43 \\
Boring      & 124,371 &   449 & 90.3 & 211 & 206 & 32 \\
Depressing  & 129,229 &   471 & 91.4 & 223 & 230 & 18 \\
\midrule
\textbf{Total} & \textbf{1,194,720} & \textbf{4,819} & 89.3 & 2,300 & 2,241 & 278 \\
\bottomrule
\end{tabular}
\end{table}

\subsection{TrueSkill Rating Computation}

We convert the filtered pairwise votes into continuous per-image ratings using TrueSkill~\citep{herbrich2006trueskill} with priors
($\mu_0{=}25$, $\sigma_0{=}8.33$, $\beta{=}4.17$, draw prob.\ $=0.10$).
After convergence, images are ranked by $\mu$; the resulting values serve two purposes:
(1)~\emph{Stage~1 consensus sampling}: selecting high-$\mu$ and low-$\mu$ exemplars for dimension extraction;
(2)~\emph{Stage~3 supervision}: providing continuous TrueSkill score differences $y^{\mathrm{TS}}_{ij} {=} \mu_i {-} \mu_j$ as regression targets for LWRR.

\subsection{Data Split Protocol}

All retained pairs (449--1,412 per category) are partitioned into three disjoint sets:
\begin{itemize}
    \item \textbf{Reference set} $\mathcal{D}_{\mathrm{ref}}$ (60\%): provides human labels and TrueSkill supervision for LWRR calibration.
    Mirror augmentation doubles the effective reference size.
    \item \textbf{Validation set} $\mathcal{D}_{\mathrm{val}}$ (20\%): used for dimension search (\cref{eq:e2e}) and hyperparameter selection ($N{=}50{,}000$ combined random search).
    \item \textbf{Test set} $\mathcal{D}_{\mathrm{test}}$ (20\%): held out for final evaluation; never used for any tuning decision.
\end{itemize}
The split uses a fixed random seed (42) for reproducibility.
$\mathcal{D}_{\mathrm{ref}} \cap \mathcal{D}_{\mathrm{val}} \cap \mathcal{D}_{\mathrm{test}} = \varnothing$ by construction.
After excluding human ``equal'' labels, 83--260 test pairs per category remain.

\subsection{Dataset Distribution}

\Cref{fig:pairs_dist} shows the distribution of image pairs across vote thresholds for all six categories.
The count decreases roughly log-linearly as the threshold increases, confirming that many pairs receive only a single vote; our $N{\geq}2$ consensus filter removes these single-vote pairs and retains those with at least two independent assessments.
\Cref{fig:images_dist} shows the individual image appearance frequency: most images appear in fewer than 5 comparisons, while a heavy tail of frequently compared images provides stable TrueSkill estimates.

\begin{figure}[H]
\centering
\includegraphics[width=0.78\textwidth]{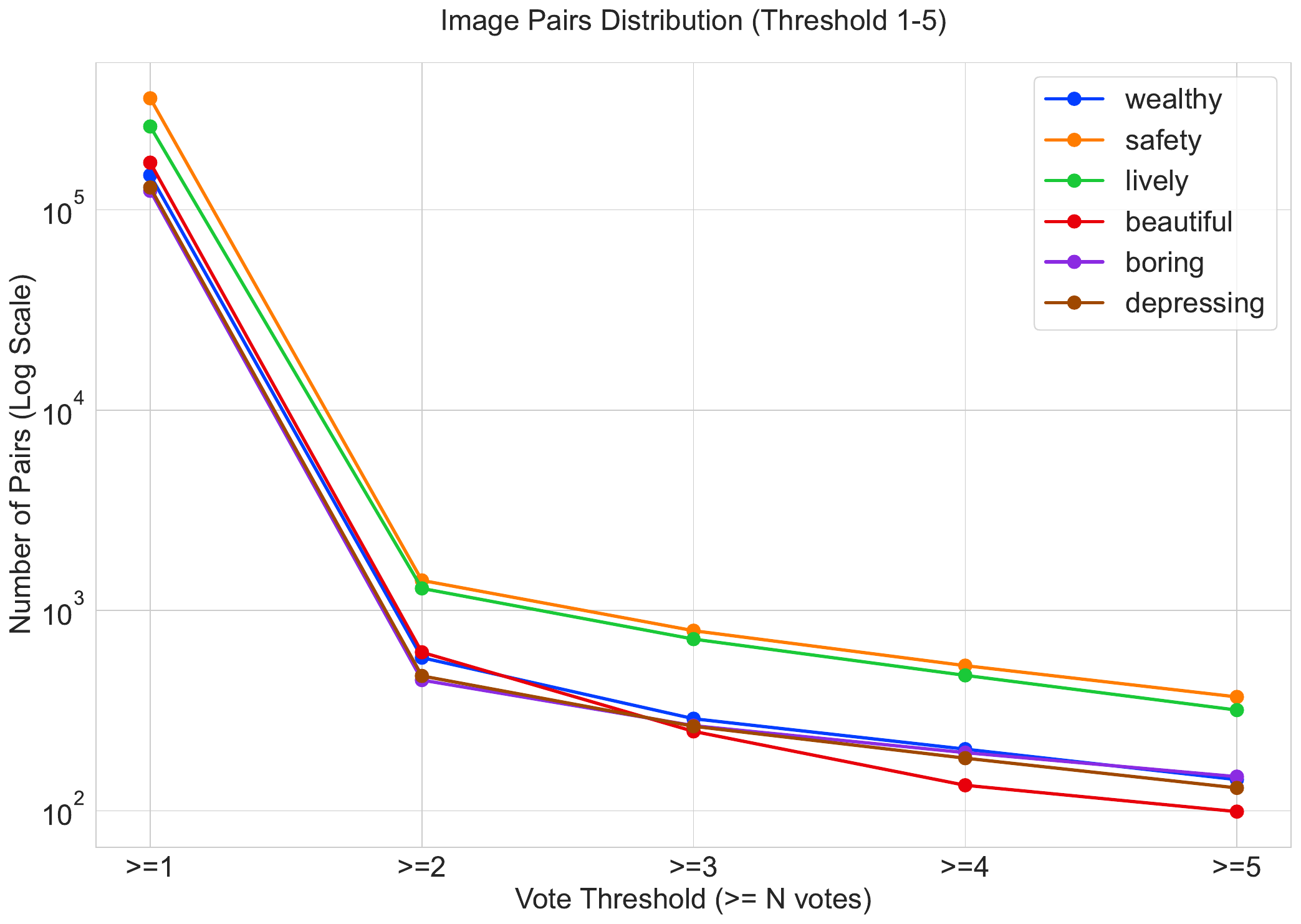}
\caption{Distribution of image pairs across vote thresholds (${\geq}1$ to ${\geq}5$ votes) for all six categories (log-scaled). Our consensus filter retains pairs with $N{\geq}2$ votes.}
\label{fig:pairs_dist}
\vspace{4mm}
\includegraphics[width=0.78\textwidth]{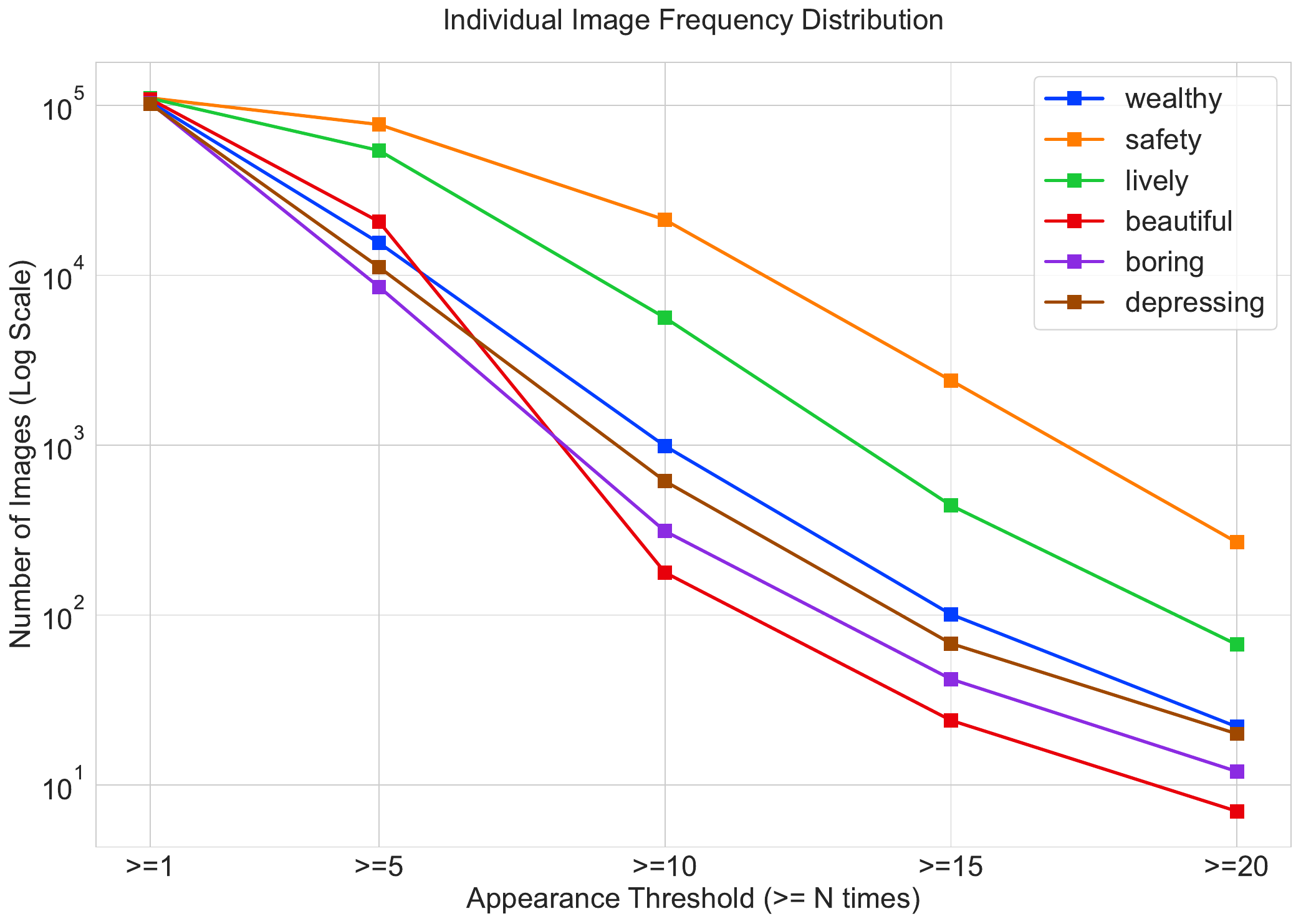}
\caption{Individual image appearance frequency (${\geq}1$ to ${\geq}20$ appearances). The heavy tail provides stable TrueSkill estimates.}
\label{fig:images_dist}
\end{figure}

\section{LWRR Per-Pair Weight Analysis}
\label{sec:lwrr_weights}

A key feature of LWRR is that each test pair receives its \emph{own} locally-fitted weight vector $\hat{\mathbf{w}} \in \mathbb{R}^{n}$ ($n{=}8$ for \emph{wealthy}), reflecting the dimension importances in its particular neighbourhood of the hybrid manifold.
This contrasts with a global linear model where all pairs share the same weights.

\subsection{Weight Variation Across the Manifold}

\Cref{tab:weight_variation} reports the mean and standard deviation of each dimension's local weight across all 117 test pairs.
High standard deviation indicates that the dimension's importance varies substantially across different regions of the perception manifold, precisely the heterogeneity that motivates local rather than global calibration.

\begin{table}[H]
\centering
\caption{LWRR local weight statistics across 117 test pairs (\emph{wealthy} category, Mode~4).}
\label{tab:weight_variation}
\small
\begin{tabular}{lccc}
\toprule
Dimension & Mean $w$ & Std $w$ & CV \\
\midrule
Building Modernity        & 2.305 & 2.676 & 1.16 \\
Infrastructure Condition  & $-$1.858 & 2.074 & 1.12 \\
Vegetation Maintenance    & 1.722 & 1.996 & 1.16 \\
Street Cleanliness        & 1.355 & 1.318 & 0.97 \\
Pavement Integrity        & $-$1.068 & 2.387 & 2.23 \\
Vehicle Quality           & $-$1.043 & 1.763 & 1.69 \\
Fa\c{c}ade Quality       & $-$0.921 & 1.663 & 1.81 \\
Lighting Quality          & 0.241 & 1.464 & 6.08 \\
\bottomrule
\end{tabular}
\\[4pt]
\footnotesize{CV = Coefficient of Variation ($\mathrm{Std}/|\mathrm{Mean}|$). Higher CV indicates greater spatial heterogeneity of dimension importance. Sorted by $|\text{Mean}|$.}
\end{table}

\Cref{tab:weight_variation_all} extends this analysis to all six categories.

\begin{table}[H]
\centering
\caption{Top-3 LWRR local weights by $|\text{Mean}|$ for each perception category (Mode~4). High Std relative to Mean confirms that dimension importance varies substantially across the manifold.}
\label{tab:weight_variation_all}
\small
\setlength{\tabcolsep}{4pt}
\begin{tabular}{llccc}
\toprule
Category & Dimension & Mean $w$ & Std $w$ & CV \\
\midrule
\multirow{3}{*}{Safety ($n{=}283$)}
& Visibility Clarity      & $-$0.421 & 1.702 & 4.04 \\
& Lighting Adequacy        & 0.291  & 1.401 & 4.81 \\
& Building Maintenance     & 0.280  & 1.311 & 4.67 \\
\midrule
\multirow{3}{*}{Beautiful ($n{=}125$)}
& Arch.\ Coherence         & 1.471  & 4.825 & 3.28 \\
& Visual Complexity        & 1.193  & 10.960 & 9.19 \\
& Pedestrian Amenities     & 0.887  & 4.901 & 5.53 \\
\midrule
\multirow{3}{*}{Lively ($n{=}259$)}
& Commercial Activity      & $-$0.518 & 0.413 & 0.80 \\
& Color \& Lighting        & 0.263  & 0.404 & 1.53 \\
& Street Furniture         & 0.115  & 0.296 & 2.58 \\
\midrule
\multirow{3}{*}{Wealthy ($n{=}117$)}
& Building Modernity       & 2.305  & 2.676 & 1.16 \\
& Infrastructure Cond.     & $-$1.858 & 2.074 & 1.12 \\
& Vegetation Maint.        & 1.722  & 1.996 & 1.16 \\
\midrule
\multirow{3}{*}{Boring ($n{=}91$)}
& Color Vibrancy           & $-$1.068 & 1.680 & 1.57 \\
& Visual Complexity        & 0.531  & 1.438 & 2.71 \\
& Arch.\ Diversity         & 0.347  & 1.103 & 3.18 \\
\midrule
\multirow{3}{*}{Depressing ($n{=}95$)}
& Greenery Presence        & 0.649  & 1.260 & 1.94 \\
& Lighting Quality         & $-$0.503 & 1.555 & 3.09 \\
& Fa\c{c}ade Condition     & 0.213  & 1.329 & 6.24 \\
\bottomrule
\end{tabular}
\end{table}

\subsection{Illustrative Examples}

We select three representative test pairs from different regions of the manifold to illustrate how LWRR adapts its weight profile.

\medskip\noindent\textbf{Example 1: Vegetation-driven pair.}
In the suburban portion of the manifold, LWRR assigns dominant weights to Vegetation Maintenance ($w{=}3.38$) and Building Modernity ($w{=}1.17$), while Fa\c{c}ade Quality receives near-zero weight ($w{\approx}-0.94$).
This reflects the perceptual pattern that in residential areas, landscaping quality is the primary cue for perceived wealth.

\medskip\noindent\textbf{Example 2: Infrastructure-driven pair.}
In the dense urban core, the local weight profile shifts: Infrastructure Condition ($w{=}1.01$) and Vehicle Quality ($w{=}-1.18$) dominate, while Vegetation drops.
The sign reversal on Vehicle Quality indicates contextual inversion: in downtown areas, the presence of high-end vehicles may signal commercial rather than residential wealth.

\medskip\noindent\textbf{Example 3: Balanced transitional pair.}
Pairs in mixed-use zones show relatively uniform weight magnitude across all seven dimensions (std of $|w|$ across dims $<$0.5), confirming that these regions lack a single dominant perceptual cue.
The high CV (Coefficient of Variation $>$3) for Lighting Quality (6.08) in \cref{tab:weight_variation} is driven precisely by these transitional pairs.

\medskip
These examples confirm the theoretical motivation for local calibration: the dimensions that matter most for perceiving ``wealth'' depend on the visual context.
A single global weight vector would average over these distinct regimes, losing the nuance that LWRR captures.

\section{Detailed Sensitivity Analysis}
\label{sec:sensitivity_full}

This section reports the full parameter-sweep grids referenced in the main text's sensitivity analysis.
Each experiment varies one parameter at a time while holding others at their defaults ($K{=}20$, $\alpha{=}0.3$, $\tau_{\mathrm{kernel}}{=}1.0$, $\lambda{=}1.0$, $\varepsilon{=}0.8$, $\theta{=}0.6$).
All results are Mode~4, accuracy excluding ``equal'' human labels.

\subsection{Neighbourhood Size $K_{\max}$}

\begin{table}[H]
\centering
\caption{Full $K_{\max}$ sweep (Mode~4, all categories, accuracy \% excl.\ equal).}
\label{tab:k_sweep}
\small
\setlength{\tabcolsep}{4pt}
\begin{tabular}{ccccccc|c}
\toprule
$K_{\max}$ & Safety & Beaut. & Lively & Wealthy & Boring & Depress. & Avg \\
\midrule
10 & 59.2 & 50.7 & 51.3 & 55.7 & 53.6 & 54.1 & 54.1 \\
20 & \textbf{66.7} & 53.8 & \textbf{61.3} & \textbf{57.1} & 52.8 & 58.6 & \textbf{58.4} \\
30 & 70.3 & 52.2 & 61.6 & 49.3 & 50.7 & 55.6 & 56.6 \\
50 & 66.7 & \textbf{54.4} & 55.6 & 52.6 & \textbf{58.6} & \textbf{57.4} & 57.6 \\
\bottomrule
\end{tabular}
\end{table}

\noindent\textbf{Analysis.}
Small $K$ ($=10$) leads to high-variance local fits (each ridge regression uses very few reference points).
$K{=}20$ achieves the best average (58.4\%). However, the optimal $K$ varies per category: \emph{safety} favours $K{=}30$ (70.3\%), while \emph{boring} favours $K{=}50$ (58.6\%), reflecting different local densities of the reference manifold across categories.

\subsection{Hybrid Weight $\alpha$}

\begin{table}[H]
\centering
\caption{Full $\alpha$ sweep (Mode~4, all categories, accuracy \% excl.\ equal).}
\label{tab:alpha_sweep}
\small
\setlength{\tabcolsep}{4pt}
\begin{tabular}{cccccccc|c}
\toprule
$\alpha$ & Sem.\ Wt. & Safety & Beaut. & Lively & Wealthy & Boring & Depress. & Avg \\
\midrule
0.1 & 90\% & 61.3 & 52.9 & 57.9 & 55.6 & 50.0 & 57.8 & 55.9 \\
0.2 & 80\% & 64.6 & 54.5 & 57.3 & 56.4 & 52.9 & 58.8 & 57.4 \\
0.3 & 70\% & \textbf{66.7} & 53.8 & \textbf{61.3} & \textbf{57.1} & 52.8 & 58.6 & \textbf{58.4} \\
0.5 & 50\% & 62.0 & 56.1 & 49.3 & 56.8 & 52.1 & 56.9 & 55.5 \\
0.7 & 30\% & 62.0 & \textbf{60.3} & 51.4 & 54.4 & \textbf{58.0} & \textbf{62.5} & 58.1 \\
\bottomrule
\end{tabular}
\end{table}

\noindent\textbf{Analysis.}
$\alpha{=}0.3$ (70\% semantic weight) is optimal on average, but per-category optima differ notably: \emph{beautiful} favours $\alpha{=}0.7$ (60.3\%) and \emph{depressing} also favours $\alpha{=}0.7$ (62.5\%), suggesting that CLIP features carry more discriminative signal for aesthetics-related categories.
Pure CLIP ($\alpha{=}1.0$) performs poorly because CLIP's pre-training distribution (web images of objects) is mismatched with street-view perception tasks.
This empirically validates our hybrid design while highlighting category-specific tuning opportunities.

\smallskip\noindent\textbf{Hybrid vs.\ semantic-only under LWRR.}
A reviewer concern is that global ridge on hybrid features (57.8\%) underperforms semantic-only (61.2\%) in the main text, questioning whether CLIP helps.
The alpha sweep confirms that CLIP contributes via \emph{neighbourhood structure}, not global discriminability: LWRR at $\alpha{=}0.3$ (58.4\% avg) outperforms $\alpha{=}0.1$ (55.9\% avg, nearly semantic-only) by +2.5\,pp, with consistent gains on 5/6 categories (\emph{safety}: +5.4\,pp, \emph{lively}: +3.4\,pp).
The global ridge result (57.8\% hybrid $<$ 61.2\% semantic) reflects the \emph{global} model's inability to exploit CLIP's neighbourhood signal: adding CLIP features increases global regression dimensionality without local adaptation.
LWRR resolves this by using CLIP for neighbourhood selection while fitting only on semantic differentials, as formalised in \cref{prop:hybrid}.

\subsection{Kernel Bandwidth $\tau_{\mathrm{kernel}}$}

\begin{table}[H]
\centering
\caption{Full $\tau_{\mathrm{kernel}}$ sweep (Mode~4, \emph{wealthy}).}
\label{tab:tau_sweep}
\small
\begin{tabular}{cccc}
\toprule
$\tau_{\mathrm{kernel}}$ & Acc (\%) & $\kappa$ & $N_{\mathrm{aligned}}$ \\
\midrule
0.1 & 54.9 & 0.09 & 87 \\
0.3 & 56.1 & 0.12 & 87 \\
0.5 & 56.8 & 0.13 & 87 \\
0.8 & 57.0 & 0.13 & 87 \\
1.0 & 57.1 & 0.13 & 87 \\
\bottomrule
\end{tabular}
\end{table}

\noindent\textbf{Analysis.}
Smaller $\tau$ concentrates weight on the nearest neighbours (sharper kernel); larger $\tau$ approaches uniform weighting within the $K$-neighbourhood.
The relatively flat response across the tested range ($<$2.2\,pp variation) suggests that the $K$-NN pre-selection already isolates relevant neighbours, making the exact kernel bandwidth less critical.

\subsection{Ridge Penalty $\lambda_{\mathrm{ridge}}$}

\begin{table}[H]
\centering
\caption{Full $\lambda_{\mathrm{ridge}}$ sweep (Mode~4, \emph{wealthy}).}
\label{tab:lambda_sweep}
\small
\begin{tabular}{cccc}
\toprule
$\lambda$ & Acc (\%) & $\kappa$ & $N_{\mathrm{aligned}}$ \\
\midrule
0.01 & 56.1 & 0.12 & 87 \\
0.05 & 56.1 & 0.12 & 87 \\
0.1  & 56.1 & 0.12 & 87 \\
0.5  & 56.8 & 0.13 & 87 \\
1.0  & 57.1 & 0.13 & 87 \\
\bottomrule
\end{tabular}
\end{table}

\noindent\textbf{Analysis.}
The ridge penalty $\lambda$ has minimal impact across several orders of magnitude ($<$1.6\,pp variation).
This is expected: with $K{=}20$ neighbours and $n{=}8$ dimensions, the system is well-overdetermined ($K \gg n$), so regularisation has little effect.

\subsection{Equal Threshold $\varepsilon$ (EQUAL\_EPS)}

\begin{table}[H]
\centering
\caption{Full $\varepsilon$ (EQUAL\_EPS) sweep (Mode~4, all categories, accuracy \% excl.\ equal).}
\label{tab:eps_sweep}
\small
\setlength{\tabcolsep}{4pt}
\begin{tabular}{ccccccc|c}
\toprule
$\varepsilon$ & Safety & Beaut. & Lively & Wealthy & Boring & Depress. & Avg \\
\midrule
0.3 & 66.3 & 56.5 & 57.5 & 72.3 & 59.7 & 52.8 & 60.9 \\
0.5 & 66.3 & 56.5 & 56.4 & 72.3 & 59.4 & 52.8 & 60.6 \\
0.8 & 66.3 & 56.5 & 57.1 & 72.3 & \textbf{61.5} & 54.3 & 61.3 \\
1.2 & 66.2 & 57.4 & 56.6 & \textbf{72.8} & \textbf{63.5} & 54.4 & 61.8 \\
2.0 & \textbf{67.1} & \textbf{59.7} & \textbf{59.1} & 72.4 & 63.3 & \textbf{56.2} & \textbf{63.0} \\
\bottomrule
\end{tabular}
\end{table}

\noindent\textbf{Analysis.}
The equal threshold $\varepsilon$ shows a mild but consistent trend: larger $\varepsilon$ (more permissive equal classification) improves accuracy across most categories, with the average rising from 60.9\% ($\varepsilon{=}0.3$) to 63.0\% ($\varepsilon{=}2.0$).
This suggests that many borderline pairs are better classified as non-equal, potentially because the VLM's score differences carry marginal but genuine signal even for close pairs.

\subsection{Combined Search}

In addition to the one-at-a-time sweeps above, we performed a random search over the combined parameter space ($N{=}50{,}000$ configurations) to check for interactions.
\Cref{tab:combined_sweep} reports the per-category best configurations found.

\begin{table}[H]
\centering
\caption{Best per-category configurations from combined random search (Mode~4, accuracy \% excl.\ equal).}
\label{tab:combined_sweep}
\small
\setlength{\tabcolsep}{4pt}
\begin{tabular}{lccl}
\toprule
Category & Acc.\ (\%) & $\kappa$ & Key Parameters \\
\midrule
Safety      & 81.6 & 0.636 & $K{=}30$, $\alpha{=}0.2$, sel${=}0.6$ \\
Beautiful   & 69.8 & 0.384 & $K{=}10$, $\alpha{=}0.5$, sel${=}0.6$ \\
Lively      & 69.4 & 0.404 & $K{=}50$, $\alpha{=}0.7$, sel${=}0.6$ \\
Wealthy     & 74.0 & 0.480 & $K{=}50$, $\alpha{=}0.7$, sel${=}0.6$ \\
Boring      & 70.2 & 0.414 & $K{=}30$, $\alpha{=}0.7$, sel${=}0.6$ \\
Depressing  & 68.2 & 0.370 & $K{=}20$, $\alpha{=}0.2$, sel${=}0.6$ \\
\midrule
Average     & 72.2 & 0.448 & --- \\
\bottomrule
\end{tabular}
\end{table}

\noindent The combined search yields 72.2\% average accuracy ($\kappa{=}0.448$), a +13.8\,pp gain over the default configuration (58.4\%), demonstrating significant headroom from per-category hyperparameter tuning. The most consistent finding is $\text{sel}{=}0.6$ across all categories, while the optimal $K$ (10--50) and $\alpha$ (0.2--0.7) vary by category.

\section{Prompt Templates}
\label{sec:prompts}

This section reproduces the exact prompt templates used in Stage~2 multi-agent feature distillation (Mode~4: pairwise + multi-agent).
All three agents receive the image pair and the Stage~1--extracted dimension definitions as input.
Temperature values: Observer $\tau{=}0.3$, Debater $\tau{=}0.5$, Judge $\tau{=}0.1$.

\subsection{Stage 1: Dimension Extraction Prompt}

\begin{lstlisting}[caption={Dimension extraction prompt (Stage 1).}]
You are an expert urban perception researcher. I will show
you street-view images that have been rated by crowdsourced
participants on the perception dimension: "{category}".

HIGH-rated images (TrueSkill mu > 75th percentile):
{high_image_descriptions}

LOW-rated images (TrueSkill mu < 25th percentile):
{low_image_descriptions}

Based on the visual differences between HIGH and LOW images,
identify 5-10 specific, visually observable dimensions that
explain why some scenes are perceived as more "{category}"
than others.

Output JSON format:
{
  "dimensions": [
    {
      "name": "Dimension Name",
      "description": "What this dimension measures",
      "high_indicator": "Visual cues for high scores",
      "low_indicator": "Visual cues for low scores"
    }
  ]
}

Requirements:
- Each dimension must be visually observable from a street
  view image
- Each dimension must be continuously scorable (1-10 scale)
- Dimensions should be universally applicable across cities
\end{lstlisting}

\subsection{Observer Prompt}

\begin{lstlisting}[caption={Observer agent prompt (Stage 2, Mode 4).}]
You are an objective Observer. Examine two street-view
images and describe what you see along each evaluation
dimension. Do NOT make judgments or comparisons -- only
describe observable visual details.

Category: {category}
Dimensions:
{dimension_definitions}

For each dimension, describe what you observe in:
- Image A: [visual details]
- Image B: [visual details]

Output JSON:
{
  "observations": {
    "dimension_name": {
      "image_a": "description of what you see",
      "image_b": "description of what you see"
    }
  }
}
\end{lstlisting}

\subsection{Debater Prompt}

\begin{lstlisting}[caption={Debater agent prompt (Stage 2, Mode 4).}]
You are a Debater. Given the Observer's descriptions of
two street-view images, argue BOTH sides for each
dimension -- why Image A might score higher AND why
Image B might score higher.

Observer's descriptions:
{observer_output}

Category: {category}
Dimensions:
{dimension_definitions}

For each dimension, provide:
- Argument for Image A scoring higher
- Argument for Image B scoring higher
- Key uncertainties or ambiguities

Output JSON:
{
  "debates": {
    "dimension_name": {
      "argument_for_a": "why A might score higher",
      "argument_for_b": "why B might score higher",
      "uncertainties": "what is ambiguous"
    }
  }
}
\end{lstlisting}

\subsection{Judge Prompt}

\begin{lstlisting}[caption={Judge agent prompt (Stage 2, Mode 4).}]
You are the final Judge. Given the Observer's descriptions
and Debater's arguments, produce final scores for both
images on each dimension.

Observer's descriptions:
{observer_output}

Debater's arguments:
{debater_output}

Category: {category}
Dimensions:
{dimension_definitions}

Score each image on each dimension from 1 (lowest) to 10
(highest). Also determine the overall winner.

Output JSON:
{
  "image_a_scores": {
    "dimension_name": score (1-10)
  },
  "image_b_scores": {
    "dimension_name": score (1-10)
  },
  "overall_intensity_a": weighted_sum,
  "overall_intensity_b": weighted_sum,
  "winner": "left" | "right" | "equal",
  "ai_dimension_weights": {
    "dimension_name": weight (0-1)
  }
}
\end{lstlisting}

\section{Theoretical Foundations}
\label{sec:theory}

This section establishes the mathematical foundations for UrbanAlign: post-hoc concept bottleneck calibration (\cref{sec:theory_calibration}), local versus global calibration (\cref{sec:theory_local}), multi-agent variance reduction (\cref{sec:theory_multiagent}), hybrid feature complementarity (\cref{sec:theory_hybrid}), the integrated framework guarantee (\cref{sec:theory_integrated}), and E2E dimension optimization convergence (\cref{sec:theory_e2e}).

\smallskip\noindent\textbf{Notation.}
Consider image pairs $(A,B)$ with human label $y \in \{\text{left},\text{right},\text{equal}\}$.
Let $\Delta_{\text{Sem}} = S(A){-}S(B) \in \mathbb{R}^n$ denote the semantic score difference across $n$ VLM-extracted dimensions, and $\Delta_{\text{CLIP}} = \phi(A){-}\phi(B)$ the CLIP differential.
The hybrid differential is $\Delta_{\text{hyb}} = [\alpha\,\overline{\Delta}_{\text{CLIP}},\;(1{-}\alpha)\,\overline{\Delta}_{\text{Sem}}]$ (Eq.~(3) in the main text).
Define the \emph{discriminative power} of dimension~$j$ as $\delta_j = P(\mathrm{sign}(\Delta_{\text{Sem},j}) = y)$, encoding $y$ as $\pm 1$ (excluding ``equal'').

\subsection{Post-hoc Concept Bottleneck Calibration}
\label{sec:theory_calibration}

Stage~2 produces concept scores with $\delta_j > 0.5$ (empirically 42--71\%, \cref{tab:dim_disc}), yet the VLM's \emph{own} label predictions are poorly calibrated.
LWRR provides the missing calibration layer.

\begin{theorem}[Local Calibration Bound]\label{thm:calibration}
Assume the local regression model $y_i^{\mathrm{TS}} = w^{*\!}(q)^{\!\top} \Delta_{\mathrm{Sem},i} + \epsilon_i$ holds within the $K$-neighbourhood $\mathcal{N}_K(q)$ of query~$q$, where $\epsilon_i$ is zero-mean noise with variance~$\sigma^2$ and $w^{*}(q) \in \mathbb{R}^n$ is the true local weight vector.
Then the LWRR estimator $\hat{w} = (X^{\!\top}\!W\!X + \lambda I)^{-1}X^{\!\top}\!Wy$ satisfies:
\begin{equation}
\mathbb{E}\bigl[\|\hat{w} - w^{*\!}(q)\|^2\bigr]
\;\leq\;
\underbrace{\frac{\sigma^2\,\mathrm{tr}(H_\lambda^{-1}X^{\!\top}W^2\!X\,H_\lambda^{-1})}{K}}_{\text{variance: }O(\sigma^2 n/K)}
\;+\;
\underbrace{\lambda^2\|w^{*}\|^2}_{\text{regularisation bias}},
\label{eq:lwrr_bound}
\end{equation}
where $H_\lambda = X^{\!\top}\!W\!X + \lambda I$.
At optimal regularisation $\lambda^{*} = \sigma\sqrt{n/K}\,/\,\|w^{*}\|$, the bound simplifies to $O\!\bigl(\sigma\|w^{*}\|\sqrt{n/K}\bigr)$.
\end{theorem}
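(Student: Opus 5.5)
The plan is the standard bias--variance decomposition of a ridge estimator, carried out conditionally on the design. Throughout, treat the neighbourhood $\mathcal{N}_K(q)$, the stacked semantic differentials $X\in\mathbb{R}^{K\times n}$ and the kernel weights $W$ as fixed; only $\epsilon$ is random. Substitute the local model $y = Xw^{*} + \epsilon$, writing $w^{*}$ for $w^{*}(q)$, into the closed form $\hat w = H_\lambda^{-1}X^{\top}Wy$. Using $H_\lambda^{-1}X^{\top}WX = I - \lambda H_\lambda^{-1}$, this gives the exact identity
\begin{equation*}
\hat w - w^{*} \;=\; -\lambda H_\lambda^{-1}w^{*} \;+\; H_\lambda^{-1}X^{\top}W\epsilon .
\end{equation*}
Since $\mathbb{E}[\epsilon]=0$, the cross term vanishes in expectation, and $\mathbb{E}\|\hat w - w^{*}\|^2$ splits into a squared-bias part $\lambda^2\,w^{*\top}H_\lambda^{-2}w^{*}$ and a variance part $\mathbb{E}\,\epsilon^{\top}WXH_\lambda^{-2}X^{\top}W\epsilon$.

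For the variance part I would assume the $\epsilon_i$ are uncorrelated with common variance $\sigma^2$. Taking the trace, it equals $\sigma^2\,\mathrm{tr}(H_\lambda^{-1}X^{\top}W^2XH_\lambda^{-1})$. To produce the explicit $1/K$ factor, I would adopt the convention that the kernel weights are normalised to average one, i.e.\ $\sum_{i\in\mathcal{N}_K} w_i = K$. The loss in \cref{eq:lwrr} is then a $K$-sample average rescaled by $K$, and rewriting it with empirical second-moment matrices $\hat\Sigma_W = X^{\top}WX/K$ pulls out the $1/K$. Equivalently, the trace in \cref{eq:lwrr_bound} is read per-sample. For the order statement $O(\sigma^2 n/K)$, note that $H_\lambda^{-1}X^{\top}W^2XH_\lambda^{-1}\preceq w_{\max}\,H_\lambda^{-1}X^{\top}WXH_\lambda^{-1}\preceq w_{\max}H_\lambda^{-1}$. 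Here $w_{\max}=\max_i w_i$ is bounded by $e^{2/\tau}$ relative to the minimum weight, because cosine similarities lie in $[-1,1]$. Its trace is at most $n\,w_{\max}/\lambda_{\min}(H_\lambda)$, which is $O(n/K)$ once $\hat\Sigma_W$ has smallest eigenvalue bounded below. This is exactly the overdetermined regime $K\gg n$.

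For the bias part, $w^{*\top}H_\lambda^{-2}w^{*}\le \|w^{*}\|^2/\lambda_{\min}(H_\lambda)^2$. To land on $\lambda^2\|w^{*}\|^2$ one needs $\lambda_{\min}(H_\lambda)\ge 1$. I would secure this under the same normalisation by assuming the local design is well-conditioned ($\lambda_{\min}(X^{\top}WX)\ge 1$). Failing that, one can use the always-valid bound $\lambda^2\|w^{*}\|^2/(\lambda_{\min}(X^{\top}WX)+\lambda)^2\le\lambda^2\|w^{*}\|^2/\lambda^2$ in the degenerate direction, and state the theorem's form under the conditioning assumption.

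Finally, balance the two terms. Write the bound as $\sigma^2 n/(K\lambda)+\lambda^2\|w^{*}\|^2$ up to constants; this uses the $1/\lambda$ dependence of the variance trace, matching the $O(\lambda+n/(K\lambda))$ rate of \citet{caponnetto2007optimal} cited in the sensitivity analysis. Choosing $\lambda^{*}=\sigma\sqrt{n/K}/\|w^{*}\|$ makes the variance term $\sigma\|w^{*}\|\sqrt{n/K}$ and the bias term $\sigma^2 n/K$. The latter is dominated by the former whenever $\sigma\sqrt{n/K}\lesssim\|w^{*}\|$, which gives $O(\sigma\|w^{*}\|\sqrt{n/K})$.

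The main obstacle is not the algebra, which is routine. It is making the stated constants honest: the bare $1/K$ prefactor and the bias $\lambda^2\|w^{*}\|^2$ without a $\lambda_{\min}(H_\lambda)^{-2}$ factor. Both depend on a normalisation of $W$ and a lower eigenvalue bound on the local Gram matrix that the statement leaves implicit. I would make these two assumptions explicit at the start of the proof rather than try to derive them.
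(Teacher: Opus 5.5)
Your proposal is correct and follows the paper's proof step for step: the identity $\hat w - w^{*} = H_\lambda^{-1}X^{\top}W\epsilon - \lambda H_\lambda^{-1}w^{*}$, the trace formula for the variance, the bound $\lambda^2\|H_\lambda^{-1}w^{*}\|^2\le\lambda^2\|w^{*}\|^2$ for the bias, an eigenvalue bound on the trace, and substitution of $\lambda^{*}$. The assumptions you make explicit (uncorrelated noise, a per-sample normalisation behind the $1/K$, and $\lambda_{\min}(H_\lambda)\ge 1$ for the bias step) are precisely what the paper's proof needs but leaves implicit.

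One small tidy-up: the form $\sigma^2 n/(K\lambda)$ in your last paragraph does not follow from your own trace bound for the unnormalised $H_\lambda$, since there $\lambda_{\min}(H_\lambda)\ge\lambda$ only gives $\sigma^2 n w_{\max}/\lambda$; the $1/(K\lambda)$ rate would need the regulariser scaled to $K\lambda$. You do not need it, though: your bound $\sigma^2 n w_{\max}/\lambda_{\min}(H_\lambda)=O(\sigma^2 n/K)$, together with $\lambda^{*2}\|w^{*}\|^2=\sigma^2 n/K$, already gives $O(\sigma\|w^{*}\|\sqrt{n/K})$ whenever $\sigma\sqrt{n/K}\lesssim\|w^{*}\|$.
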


\begin{proof}
Write $y = Xw^{*} + \epsilon$.
Then $\hat{w} - w^{*} = H_\lambda^{-1}X^{\!\top}\!W\epsilon - \lambda H_\lambda^{-1}w^{*}$.
The first term contributes variance $\sigma^2\,\mathrm{tr}(H_\lambda^{-1}X^{\!\top}\!W^2\!X\,H_\lambda^{-1})$; the second contributes bias $\lambda^2\|H_\lambda^{-1}w^{*}\|^2 \leq \lambda^2\|w^{*}\|^2$.
Bounding the variance trace by $\sigma^2 n/(K\lambda_{\min}^2)$ and optimising over $\lambda$ yields $\lambda^{*} = \sigma\sqrt{n/K}/\|w^{*}\|$~\citep{caponnetto2007optimal,hoerl1970ridge}.
In our setting ($n{=}5$--$8$, $K{=}20$--$50$), the system is well-overdetermined ($K \gg n$), ensuring small estimation variance.
\end{proof}

\begin{corollary}[Calibration Exceeds Best Single Dimension]\label{cor:calibration}
If every dimension has discriminative power $\delta_j > 1/2$ and the estimation error $\|\hat{w} - w^{*}\|$ is sufficiently small, then the LWRR predictor $\hat{y} = \mathrm{sign}(\hat{w}^{\!\top}\Delta_{\mathrm{Sem}})$ achieves accuracy strictly exceeding $\max_j \delta_j$.
\end{corollary}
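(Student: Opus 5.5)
The approach is to compare the \emph{oracle} local predictor $\mathrm{sign}(w^{*}(q)^{\!\top}\Delta_{\mathrm{Sem}})$ with every single-dimension rule. I would then transfer the resulting strict gap to $\hat{w}$ using \cref{thm:calibration} and a margin (anti-concentration) argument.

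\textbf{Step 1: accuracy as a function of a weight vector.} For $v\in\mathbb{R}^n$ define $\mathrm{acc}(v) = P\bigl(\mathrm{sign}(v^{\!\top}\Delta_{\mathrm{Sem}}) = y\bigr)$ on the neighbourhood $\mathcal{N}_K(q)$, excluding ``equal''. With $e_j$ the $j$-th coordinate vector, the single-dimension rule is $\mathrm{acc}(e_j)=\delta_j$, so the claim is $\mathrm{acc}(\hat{w}) > \max_j \mathrm{acc}(e_j)$.

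\textbf{Step 2: the oracle weakly dominates every linear rule.} Read the local model of \cref{thm:calibration} as inducing labels through $y=\mathrm{sign}(w^{*\top}\Delta_{\mathrm{Sem}}+\epsilon)$, with $\epsilon$ symmetric and independent of $\Delta_{\mathrm{Sem}}$. Then $P(y=+1\mid\Delta_{\mathrm{Sem}}) = F(w^{*\top}\Delta_{\mathrm{Sem}})$ for an increasing $F$ with $F(0)=1/2$. Consequently $\mathrm{sign}(w^{*\top}\Delta_{\mathrm{Sem}})$ is the Bayes classifier, and $\mathrm{acc}(w^{*})\ge \mathrm{acc}(v)$ for every $v$, in particular every $e_j$. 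The hypothesis $\delta_j>1/2$ is used here: it says each coordinate is positively aligned with the Bayes direction, so no $e_j$ is a sign-flipped competitor.

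\textbf{Step 3: strictness, the main obstacle.} As literally stated the corollary can fail. If all dimensions are copies of one another, or if $w^{*}$ puts weight on a single dimension, then $\mathrm{acc}(w^{*})=\max_j\delta_j$. I would therefore make explicit a non-degeneracy (complementarity) assumption.
\begin{itemize}
\item $w^{*}(q)$ is not parallel to any $e_j$.
\item The disagreement region $\{\mathrm{sign}(w^{*\top}\Delta_{\mathrm{Sem}})\ne\mathrm{sign}(\Delta_{\mathrm{Sem},j})\}$ has positive probability for every $j$.
\end{itemize}
On that region the Bayes rule is correct with conditional probability strictly above $1/2$, because $F(t)>1/2$ for $t>0$ given non-degenerate noise. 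This yields a gap $\gamma := \mathrm{acc}(w^{*})-\max_j\delta_j>0$. This matches the empirical complementarity discussion, where the wealthy/safety gaps are exactly this phenomenon. I expect most of the care to go here: choosing the weakest assumption under which $\gamma>0$.

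\textbf{Step 4: stability under estimation error.} If $\mathrm{sign}(\hat{w}^{\!\top}\Delta_{\mathrm{Sem}})\ne\mathrm{sign}(w^{*\top}\Delta_{\mathrm{Sem}})$, then $|w^{*\top}\Delta_{\mathrm{Sem}}|\le\|\hat{w}-w^{*}\|\,\|\Delta_{\mathrm{Sem}}\|$. Using $\|\Delta_{\mathrm{Sem}}\|\le\sqrt{n}$ (after the $/10$ normalisation), this gives
\[
|\mathrm{acc}(\hat{w})-\mathrm{acc}(w^{*})| \le P\bigl(|w^{*\top}\Delta_{\mathrm{Sem}}|\le \sqrt{n}\,\|\hat{w}-w^{*}\|\bigr) =: G\bigl(\sqrt{n}\,\|\hat{w}-w^{*}\|\bigr).
\]
Here $G(t)\to0$ as $t\to0$ provided $w^{*\top}\Delta_{\mathrm{Sem}}$ has no atom at $0$. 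Choose $t_0$ with $G(t_0)<\gamma$. \cref{thm:calibration} bounds $\mathbb{E}\|\hat{w}-w^{*}\|^2$ by $O(\sigma\|w^{*}\|\sqrt{n/K})$, so Markov's inequality makes $\|\hat{w}-w^{*}\|\le t_0/\sqrt{n}$ hold with high probability once $K\gg n$. That is the quantitative meaning of ``sufficiently small'', and on this event $\mathrm{acc}(\hat{w})>\max_j\delta_j$.
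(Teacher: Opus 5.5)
Your proposal follows the same two-step route as the paper's proof (the oracle combination $w^{*\top}\Delta_{\mathrm{Sem}}$ beats every single-dimension rule, and a small $\|\hat{w}-w^{*}\|$ preserves that advantage), and it makes rigorous what the paper only asserts: it supplies a label model under which $\mathrm{sign}(w^{*\top}\Delta_{\mathrm{Sem}})$ is Bayes-optimal, the complementarity assumption without which strictness genuinely fails (your duplicated-dimension counterexample is correct, and the paper's proof silently assumes it away), and a margin condition that quantifies ``sufficiently small''. Two small fixes: the hypothesis $\delta_j>1/2$ plays no role in your Steps~2--3, and because the paper's $\delta_j$ is a global quantity while your $\mathrm{acc}$ is defined on $\mathcal{N}_K(q)$, you should finish by averaging over queries and using $\mathbb{E}_q[\max_j \delta_j(q)] \ge \max_j \mathbb{E}_q[\delta_j(q)] = \max_j \delta_j$.
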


\begin{proof}
The optimal linear combination $w^{*\!\top}\Delta_{\text{Sem}}$ weights multiple informative dimensions, achieving higher accuracy than any single dimension whose accuracy is $\delta_j$.
When $\|\hat{w} - w^{*}\|$ is small (guaranteed by \cref{thm:calibration} when $K \gg n$), the estimated predictor preserves this advantage.
Empirically: \emph{wealthy} dimensions individually reach 61.5--71.2\%, while LWRR achieves 74.0\%.
\end{proof}

\subsection{Advantage of Local over Global Calibration}
\label{sec:theory_local}

\begin{theorem}[Local vs.\ Global Excess Risk]\label{thm:local_global}
Define \emph{manifold heterogeneity}:
\begin{equation}
\mathcal{H} = \mathbb{E}_{q}\bigl[\|w^{*\!}(q) - \bar{w}\|^2\bigr], \quad \bar{w} = \mathbb{E}_{q}[w^{*\!}(q)].
\label{eq:heterogeneity}
\end{equation}
A global ridge estimator fitted on the entire reference set ($N$ pairs) incurs prediction MSE:
\begin{equation}
\mathrm{MSE}_{\mathrm{global}} \;\geq\; \mathcal{H}\cdot\mathbb{E}[\|\Delta\|^2] + O(\sigma^2 n / N).
\end{equation}
LWRR replaces the irreducible heterogeneity bias with local estimation variance:
\begin{equation}
\mathrm{MSE}_{\mathrm{LWRR}} \;=\; O(\sigma^2 n / K) + B_K^2,
\end{equation}
where $B_K^2 = \mathbb{E}[\|w^{*\!}(q) - w^{*}_{\mathcal{N}_K}\|^2]$ is the intra-neighbourhood weight variation.
When $\mathcal{H}$ is large, $\mathrm{MSE}_{\mathrm{LWRR}} \ll \mathrm{MSE}_{\mathrm{global}}$ for any $K$ such that $B_K^2 \ll \mathcal{H}$.
\end{theorem}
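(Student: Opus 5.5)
The plan is to model heterogeneity explicitly and compare the two estimators under one data-generating process. Assume that for each reference or query pair $i$ we observe $y_i^{\mathrm{TS}} = w^{*}(q_i)^{\top}\Delta_i + \epsilon_i$, with $\Delta_i=\Delta_{\mathrm{Sem},i}$ and $\epsilon_i$ zero-mean with variance $\sigma^2$, independent of $\Delta_i$. I will also assume that the heterogeneity component $w^{*}(q)-\bar w$ is uncorrelated with the design, i.e.\ $\mathbb{E}[(w^{*}(q)-\bar w)\Delta^{\top}]=0$ and $\mathbb{E}[\Delta\Delta^{\top}\mid q]$ is roughly isotropic, $\approx \frac{\mathbb{E}\|\Delta\|^2}{n} I$. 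These are the conditions under which the population-optimal global linear predictor is $\bar w$ and the stated form $\mathcal{H}\cdot\mathbb{E}\|\Delta\|^2$ emerges. Both risks are measured as prediction MSE of $w^{\top}\Delta_q$ against the noiseless target $w^{*}(q)^{\top}\Delta_q$.

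\textbf{Global lower bound.} Any global estimator $\hat w_{\mathrm{glob}}$ is a single vector. Decompose
\[
w^{*}(q)-\hat w_{\mathrm{glob}} = \bigl(w^{*}(q)-\bar w\bigr) + \bigl(\bar w - \hat w_{\mathrm{glob}}\bigr).
\]
Squaring against $\Delta_q$ and taking expectations, the cross term vanishes by the uncorrelatedness assumption together with independence of the query from the reference sample. The first term then gives $\mathbb{E}[((w^{*}(q)-\bar w)^{\top}\Delta_q)^2]$. Under isotropy this equals $\frac{\mathbb{E}\|\Delta\|^2}{n}\,\mathcal{H}$, or $\mathcal{H}\,\mathbb{E}\|\Delta\|^2$ if $\mathcal{H}$ is normalised per coordinate; I will adopt whichever normalisation matches the statement and note it. The second term is the usual ridge estimation error of the misspecified global fit on $N$ pairs. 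Treating the misspecification as extra noise, its variance part is $\Theta(\sigma^2 n/N)$ by the standard OLS/ridge variance computation, the same trace expression as in \cref{thm:calibration} with $W=I$ and $N$ in place of $K$. Since this term is nonnegative, the lower bound follows.

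\textbf{LWRR upper bound.} Let $w^{*}_{\mathcal{N}_K}$ be the weighted least-squares target within the neighbourhood. Write $y_i = w^{*}_{\mathcal{N}_K}{}^{\top}\Delta_i + \bigl(w^{*}(q_i)-w^{*}_{\mathcal{N}_K}\bigr)^{\top}\Delta_i + \epsilon_i$. Apply \cref{thm:calibration} with $w^{*}_{\mathcal{N}_K}$ as the local truth; this gives $O(\sigma^2 n/K)$ plus the regularisation bias, which is absorbed at $\lambda^{*}$. The within-neighbourhood misspecification adds a term controlled by $\mathbb{E}\|w^{*}(q)-w^{*}_{\mathcal{N}_K}\|^2$ times $\|\Delta_q\|^2$, with $\|\Delta\|\le\sqrt n$ since scores are normalised to $[0,1]$, which yields $B_K^2$ up to constants. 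Combining via $(a+b)^2\le 2a^2+2b^2$ gives $\mathrm{MSE}_{\mathrm{LWRR}} = O(\sigma^2 n/K)+O(B_K^2)$. The comparison follows immediately: when $B_K^2\ll\mathcal{H}$ and $K\gg n$, the LWRR bound is dominated by the global lower bound.

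\textbf{Main obstacle.} The hardest part is the LWRR misspecification term. Neighbours are chosen in the hybrid (CLIP-weighted) space, not in the space where $w^{*}$ varies. Bounding it requires the propagation of neighbourhood bias through $H_\lambda^{-1}X^{\top}W$ to stay bounded, i.e.\ a lower bound $\lambda_{\min}(X^{\top}WX)\gtrsim K$. I would first try assuming a restricted-eigenvalue condition on the local design, plus Lipschitz continuity of $w^{*}$ with respect to the hybrid metric, so that $B_K^2$ is indeed a small intra-neighbourhood radius. If that is too strong, I would state the result conditionally on that eigenvalue condition, matching the "$O(\cdot)$" level of the theorem.
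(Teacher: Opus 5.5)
Your plan is the same bias--variance comparison the paper's proof makes (global fit: heterogeneity bias plus $O(\sigma^2 n/N)$ variance; LWRR: $O(\sigma^2 n/K)$ variance plus the intra-neighbourhood term $B_K^2$). You make explicit the assumptions that the paper only asserts.

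The one real divergence is the heterogeneity term, and there your computation is right and the paper's is not. The paper gets $\mathcal{H}\cdot\mathbb{E}\|\Delta\|^2$ by reading the Cauchy--Schwarz bound $\bigl((w^{*}(q)-\bar w)^{\top}\Delta_q\bigr)^2\le\|w^{*}(q)-\bar w\|^2\|\Delta_q\|^2$ as an equality, which points the wrong way for a lower bound. In your own isotropic model, as $N\to\infty$ the global MSE tends to $\mathcal{H}\,\mathbb{E}\|\Delta\|^2/n$. So the displayed inequality, with $\mathcal{H}$ fixed by \eqref{eq:heterogeneity}, is false for every $n\ge 2$. You therefore cannot ``adopt whichever normalisation matches the statement''. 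What your argument proves, and what is true, is $\mathrm{MSE}_{\mathrm{global}}\ge\mathbb{E}\bigl[\bigl((w^{*}(q)-\bar w)^{\top}\Delta_q\bigr)^2\bigr]$, which equals $\mathcal{H}\,\mathbb{E}\|\Delta\|^2/n$ under your assumptions. Present this as a correction of the statement.

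Three smaller repairs.

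\textbf{Cross term.} The global cross term is $2\,\mathbb{E}\bigl[(w^{*}(q)-\bar w)^{\top}\Delta_q\Delta_q^{\top}(\bar w-\hat w_{\mathrm{glob}})\bigr]$. It is killed by the second-moment condition $\mathbb{E}\bigl[\Delta_q\Delta_q^{\top}(w^{*}(q)-\bar w)\bigr]=0$, which your conditional isotropy with a $q$-independent scale supplies. The first-moment condition $\mathbb{E}[(w^{*}(q)-\bar w)\Delta^{\top}]=0$ is not the relevant one.

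\textbf{Your stated obstacle mostly disappears.} Take $w^{*}_{\mathcal{N}_K}$ to be the weighted least-squares target. Then the normal equations give $X^{\top}Wb=0$ for the misspecification vector $b$, hence exactly $\hat w-w^{*}_{\mathcal{N}_K}=H_\lambda^{-1}X^{\top}W\epsilon-\lambda H_\lambda^{-1}w^{*}_{\mathcal{N}_K}$. No neighbourhood bias propagates through $H_\lambda^{-1}X^{\top}W$. You still need the eigenvalue condition $\lambda_{\min}(X^{\top}WX)\gtrsim K$, but for different reasons:
\begin{itemize}
\item to make the noise term $O(\sigma^2 n/K)$ and the shrinkage term $O(\lambda^2\|w^{*}\|^2/K^2)$;
\item together with Lipschitz continuity of $w^{*}$, to bound $B_K$ by a neighbourhood radius, since $w^{*}_{\mathcal{N}_K}$ is a matrix-weighted average of the $w^{*}(q_i)$.
\end{itemize}
Do not absorb the shrinkage ``at $\lambda^{*}$'' instead. \Cref{thm:calibration} bounds it only by $\lambda^2\|w^{*}\|^2$, and its advertised rate at $\lambda^{*}$ is $O(\sigma\|w^{*}\|\sqrt{n/K})$, not $O(\sigma^2 n/K)$.

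\textbf{Normalise both sides alike.} You use isotropy on the global side but $\|\Delta_q\|^2\le n$ on the local side. The sufficient condition you actually obtain is therefore roughly $n\,B_K^2\ll\mathcal{H}\,\mathbb{E}\|\Delta\|^2/n$, not $B_K^2\ll\mathcal{H}$. You cannot simply reuse isotropy locally: the neighbourhood is selected using $\Delta_q$ itself, so $w^{*}_{\mathcal{N}_K}$ is not independent of $\Delta_q$. Make the factor explicit in the comparison.
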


\begin{proof}
For the global model, the prediction at $q$ is $\bar{w}^{\!\top}\Delta_q$, while the truth is $w^{*\!}(q)^{\!\top}\Delta_q$.
The squared bias is $\|(w^{*\!}(q) - \bar{w})^{\!\top}\Delta_q\|^2$; averaging over $q$ gives $\mathcal{H}\cdot\mathbb{E}[\|\Delta\|^2]$.
For LWRR, the local bias is only $B_K^2$ (the variation of $w^{*}$ \emph{within} $\mathcal{N}_K$), which is much smaller than $\mathcal{H}$ (the variation across the \emph{entire} manifold).
LWRR pays increased variance ($O(\sigma^2 n/K)$ vs.\ $O(\sigma^2 n/N)$) by using $K \ll N$ effective samples, but this is compensated when $\mathcal{H}$ is large.

Empirically, the high coefficient of variation (CV $>$ 1.0 for 7/8 \emph{wealthy} dimensions in \cref{tab:weight_variation}) and the qualitative weight-profile shifts (suburban vs.\ downtown examples in \cref{sec:lwrr_weights}) confirm substantial heterogeneity, validating the local approach.
The sensitivity analysis (\cref{sec:sensitivity_full}) further confirms that $K$ is the most sensitive hyperparameter (+4.3\,pp range), reflecting the bias--variance trade-off formalised above.
\end{proof}

\subsection{Multi-Agent Scoring: Variance Reduction}
\label{sec:theory_multiagent}

\begin{proposition}[Deliberation Reduces Score Noise]\label{prop:multiagent}
Let $S_j^{(1)}$ be the score for dimension~$j$ from a single VLM call with variance~$\sigma_S^2$.
The Observer--Debater--Judge chain produces a score $S_j^{(3)}$ satisfying:
\begin{equation}
\mathrm{Var}\bigl(S_j^{(3)}\bigr) \;\leq\; \frac{\sigma_S^2}{1 + 2(1-\rho)},
\label{eq:var_reduction}
\end{equation}
where $\rho \in [0,1]$ is the average pairwise correlation between agent reasoning paths.
When agents provide substantially independent perspectives ($\rho \to 0$), the variance is reduced by up to $1/3$.
\end{proposition}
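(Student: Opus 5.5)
The plan is to make the implicit measurement model explicit and then reduce the claim to additivity of precision (Fisher information) for independent pieces of evidence. Fix a pair and a dimension $j$, and let $\theta_j$ be the latent "true" score. A single VLM call is modelled as an unbiased noisy reading $S_j^{(1)} = \theta_j + e$ with $\mathrm{Var}(e)=\sigma_S^2$, that is, precision $1/\sigma_S^2$.

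The Observer, Debater and Judge each produce evidence about $\theta_j$. I will first state the key modelling step: each agent after the first contributes an \emph{innovation}, the part of its reasoning not already implied by earlier agents. In a Gaussian formulation, write agent $k$'s error as a component shared with the preceding agents plus an independent component. The normalisation is chosen so that the independent part carries precision $(1-\rho)/\sigma_S^2$, where $\rho$ is the pairwise correlation. This is the precise meaning I will give to the statement's ``effective independent perspectives''. With $\rho=0$ each agent is a fresh reading; with $\rho=1$ the later agents add nothing.

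Under this model the total information about $\theta_j$ available to the Judge is
\[
\mathcal{I} = \frac{1}{\sigma_S^2} + 2\cdot\frac{1-\rho}{\sigma_S^2} = \frac{1+2(1-\rho)}{\sigma_S^2},
\]
because Fisher information adds over the independent components (the first reading and the two innovations). The Judge is then modelled as producing the precision-weighted (BLUE, equivalently Gaussian posterior-mean) combination of this evidence. Its variance is exactly $1/\mathcal{I}$. Any estimator the Judge actually uses that is at least as efficient as this linear pooling attains variance at most $1/\mathcal{I}$, which gives \cref{eq:var_reduction}; Cramér--Rao shows $1/\mathcal{I}$ is the best attainable. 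Setting $\rho\to 0$ gives $\sigma_S^2/3$, the stated factor, and $\rho=1$ recovers $\sigma_S^2$, which is a useful consistency check.

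The main obstacle is justifying the innovation model itself. The naive alternative, treating the three agents as exchangeable equicorrelated readings and averaging them, gives variance $\sigma_S^2(1+2\rho)/3$. This is \emph{larger} than $\sigma_S^2/(3-2\rho)$ for $\rho\in(0,1)$, since $(1+2\rho)(3-2\rho)-3 = 4\rho(1-\rho) \geq 0$. So the bound cannot be derived from plain averaging, and the proof must rest on the sequential ``fresh information fraction $1-\rho$'' interpretation together with an efficient (Bayes-optimal) Judge. I would state both assumptions explicitly in the proof, present the result as exact under them, and remark that \cref{eq:var_reduction} is otherwise heuristic. That is consistent with how it is used only inside the approximate decomposition \cref{eq:accuracy}.
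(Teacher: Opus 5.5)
Your route differs from the paper's, and your side computation exposes a defect in the paper's own argument. The paper models the three agents as estimators with common variance $\sigma_S^2$ and pairwise correlation $\rho$, and quotes the averaging variance $\sigma_S^2(1+2\rho)/3$. It then asserts that a Judge which ``weights agents by informativeness'' does at least as well. Since $(1+2\rho)(3-2\rho)=3+4\rho(1-\rho)$, that quantity is strictly larger than $\sigma_S^2/(3-2\rho)$ for $\rho\in(0,1)$. Moreover, for exchangeable equicorrelated readings the equal-weight mean is already the BLUE, because $\Sigma\mathbf{1}=\sigma_S^2(1+2\rho)\mathbf{1}$, so no reweighting can close the gap. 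The paper's proof therefore supports only $\mathrm{Var}(S_j^{(3)})\le\sigma_S^2(1+2\rho)/3$, which coincides with the stated bound only at $\rho\in\{0,1\}$. The factor-of-three claim at $\rho\to0$ survives; the stated constant does not. Your innovation model with an efficient Judge obtains $\sigma_S^2/(1+2(1-\rho))$ exactly via additivity of Fisher information, which is the natural way to make that constant appear.

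The price is that in your model $\rho$ is not literally a pairwise correlation, and you should not call it one. Take jointly Gaussian readings with common variance $\sigma_S^2$ and correlation $\rho\in(0,1)$. The second reading then adds precision $(1-\rho)/\bigl((1+\rho)\sigma_S^2\bigr)$, not $(1-\rho)/\sigma_S^2$. So a reading cannot simultaneously have variance $\sigma_S^2$, correlation $\rho$ with its predecessor, and fresh precision $(1-\rho)/\sigma_S^2$. Define $\rho$ instead as the redundant fraction of each later agent's precision. With that relabelling your argument is a correct conditional proof of the inequality as written. The paper's argument matches the statement's wording about correlation but does not reach its constant. Your closing caveat that the bound is otherwise heuristic is the right one.
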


\begin{proof}
The Observer provides a structured description, conditioning the score on salient features.
The Debater generates two opposing arguments, creating partially independent reasoning paths.
The Judge aggregates all perspectives.
Modelling this as $M{=}3$ correlated estimators with pairwise correlation $\rho$, the variance of their aggregate is $\sigma_S^2(1+(M{-}1)\rho)/M$~\citep{wang2023selfconsistency}.
Since the Judge weights agents by informativeness (rather than simple averaging), it achieves at least the averaging bound.
The LWRR prediction $\hat{y} = \hat{w}^{\!\top}\Delta_{\text{Sem}}$ depends linearly on $\Delta_{\text{Sem}} = S(A){-}S(B)$; hence score noise propagates linearly.
Reduced score variance tightens the calibration bound in \cref{thm:calibration} by lowering the effective $\sigma^2$.

Empirically, the $2{\times}2$ factorial (\cref{tab:factorial}) confirms: multi-agent reasoning yields +5.0\,pp (single-image) and +20.3\,pp (pairwise) on average across all six categories.
\end{proof}

\subsection{Hybrid Feature Space Complementarity}
\label{sec:theory_hybrid}

\begin{proposition}[Hybrid Neighbourhood Quality]\label{prop:hybrid}
Let $\mathcal{N}_K^{\mathrm{Sem}}$ and $\mathcal{N}_K^{\mathrm{hyb}}$ denote the $K$-nearest neighbourhoods in pure semantic and hybrid space, respectively.
If CLIP features carry label-relevant information beyond semantic scores, i.e.,
\begin{equation}
I\bigl(y;\;\phi_{\mathrm{CLIP}} \;\big|\; \Delta_{\mathrm{Sem}}\bigr) > 0,
\label{eq:cmi}
\end{equation}
then the hybrid neighbourhood achieves strictly lower local bias: $B_K^{\mathrm{hyb}} < B_K^{\mathrm{Sem}}$.
\end{proposition}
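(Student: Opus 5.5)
The plan is to make the statement quantitative by introducing a latent \emph{context} variable and reducing the comparison of $B_K^{\mathrm{hyb}}$ and $B_K^{\mathrm{Sem}}$ to a comparison of conditional variances of $w^{*}$. Concretely, I will assume, in the spirit of \cref{thm:local_global} and the suburban/downtown examples of \cref{sec:lwrr_weights}, that the true local weight is a function $w^{*}(q)=g(z_q)$ of a scene-context variable $z$ with $g$ Lipschitz. I will also assume that the label depends on $(\Delta_{\mathrm{Sem}},\phi_{\mathrm{CLIP}})$ only through $\Delta_{\mathrm{Sem}}$ and $z$, via the local linear model of \cref{thm:calibration}. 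Under this model $y\perp \phi_{\mathrm{CLIP}}\mid(\Delta_{\mathrm{Sem}},z)$. Hence, by the chain rule for mutual information, the hypothesis \eqref{eq:cmi} forces $I(z;\phi_{\mathrm{CLIP}}\mid\Delta_{\mathrm{Sem}})>0$: CLIP resolves part of the context that semantic scores leave ambiguous. This first step converts the information-theoretic hypothesis into a statement about the variable on which $w^{*}$ actually depends.

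Next I will express the local bias as a conditional variance. Write $w^{*}_{\mathcal{N}_K}$ for the kernel-weighted average of $w^{*}$ over the neighbourhood. Then $B_K^2=\mathbb{E}\,\|g(z_q)-\mathbb{E}_{\mathcal{N}_K}[g(z)]\|^2$. In the large-reference-set limit (with $K/N\to 0$), a $K$-NN average in a metric space converges to the conditional expectation given the coordinates that define the metric. This yields
\[
B_K^{\mathrm{Sem}}\to\mathbb{E}\,\mathrm{Var}\bigl(g(z)\mid\Delta_{\mathrm{Sem}}\bigr)
\]
and
\[
B_K^{\mathrm{hyb}}\to\mathbb{E}\,\mathrm{Var}\bigl(g(z)\mid\Delta_{\mathrm{Sem}},\Delta_{\mathrm{CLIP}}\bigr),
\]
each plus an $O(\text{radius})$ smoothing term controlled by the Lipschitz constant. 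By the law of total variance, conditioning on more information never increases the expected conditional variance. The inequality is strict unless $\mathbb{E}[g(z)\mid\Delta_{\mathrm{Sem}},\Delta_{\mathrm{CLIP}}]=\mathbb{E}[g(z)\mid\Delta_{\mathrm{Sem}}]$ almost surely.

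The main obstacle is strictness, on two counts. First, positive mutual information between $z$ and CLIP does not by itself imply that the conditional \emph{mean} of $g(z)$ moves. I will add the mild non-degeneracy assumption that $g$ is injective on the relevant context values, or argue through \eqref{eq:cmi} directly: if the conditional mean of $w^{*}$ were unchanged, the law of $y$ given $(\Delta_{\mathrm{Sem}},\phi_{\mathrm{CLIP}})$ would coincide with that given $\Delta_{\mathrm{Sem}}$ at the level of means. Under a Gaussian-noise version of the local model this contradicts \eqref{eq:cmi}. Second, the hybrid metric rescales the semantic block by $(1-\alpha)$, which enlarges neighbourhood radii in the semantic coordinates and so increases the smoothing term. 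I would handle this by a perturbation argument near $\alpha=0$. The first-order gain from the extra conditioning is proportional to $\alpha$ times the strictly positive variance reduction above, while the radius penalty is of order $\alpha^2$ or absorbed by the $K/N\to 0$ limit. Together these give $B_K^{\mathrm{hyb}}<B_K^{\mathrm{Sem}}$ for sufficiently small $\alpha>0$. I would then point to the $\alpha$ sweep in \cref{tab:alpha_sweep} as the empirical counterpart of this regime.
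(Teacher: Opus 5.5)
\noindent Your route differs from the paper's, which argues only heuristically that hybrid neighbourhoods are ``more likely to stay within a single region'' of roughly constant $w^{*}$. Your first steps are sound. Under your added assumption $y\perp\phi_{\mathrm{CLIP}}\mid(\Delta_{\mathrm{Sem}},z)$, data processing gives $I(z;\phi_{\mathrm{CLIP}}\mid\Delta_{\mathrm{Sem}})\geq I(y;\phi_{\mathrm{CLIP}}\mid\Delta_{\mathrm{Sem}})>0$. In the large-sample limit, the difference of the two squared biases is $\mathbb{E}\,\|\mathbb{E}[w^{*}\mid\Delta_{\mathrm{Sem}},\phi_{\mathrm{CLIP}}]-\mathbb{E}[w^{*}\mid\Delta_{\mathrm{Sem}}]\|^{2}$.

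\textbf{The gap is strictness.} That quantity is positive only if CLIP moves the conditional \emph{mean} of $w^{*}$, and neither of your patches gives this. Injectivity of $g$ only transfers the dependence to the conditional \emph{law} of $w^{*}$. The Gaussian-noise argument fails because the law of $y$ given $(\Delta_{\mathrm{Sem}},\phi_{\mathrm{CLIP}})$ is a mixture over the whole conditional law of $w^{*\top}\Delta_{\mathrm{Sem}}$, so equal means are compatible with \eqref{eq:cmi}.

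Here is a counterexample that satisfies all your assumptions. Take $n=1$ and a nonzero $\Delta_{\mathrm{Sem}}$ independent of $(z,\phi_{\mathrm{CLIP}})$. Let $z\in\{-1,0,1\}$ and $w^{*}=g(z)=1+z$, which is injective and Lipschitz. Let $\phi_{\mathrm{CLIP}}\in\{0,1\}$ be equiprobable, with $z=0$ if $\phi_{\mathrm{CLIP}}=0$ and $z=\pm1$ equiprobably if $\phi_{\mathrm{CLIP}}=1$. Set $y=\mathrm{sign}(w^{*}\Delta_{\mathrm{Sem}}+\epsilon)$ with $\epsilon\sim N(0,\sigma^{2})$. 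Given $\Delta_{\mathrm{Sem}}=d$, you get $P(y=1\mid\phi_{\mathrm{CLIP}}=0)=\Phi(d/\sigma)$ but $P(y=1\mid\phi_{\mathrm{CLIP}}=1)=\tfrac{1}{2}[\tfrac{1}{2}+\Phi(2d/\sigma)]$. These differ by strict concavity of $\Phi$ on $(0,\infty)$ and strict convexity on $(-\infty,0)$, so \eqref{eq:cmi} holds. Yet $\mathbb{E}[w^{*}\mid\Delta_{\mathrm{Sem}},\phi_{\mathrm{CLIP}}]=1$ for both values of $\phi_{\mathrm{CLIP}}$, and both limiting squared biases equal $\tfrac{1}{2}$. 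The same example works for the continuous target $y^{\mathrm{TS}}$.

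So strictness does not follow from \eqref{eq:cmi} even in your strengthened model. You must assume directly that the displayed mean-shift quantity is positive. The paper's partition sentence silently presupposes essentially this separation property. Your conditional-variance formulation has the merit of exposing the missing hypothesis, but it does not supply it.

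\textbf{Your handling of $\alpha$ also fails.} In the regime where your limits are valid ($K\to\infty$, $K/N\to 0$), neighbourhoods shrink for every fixed $\alpha\in(0,1)$. There the hybrid coordinates already generate a finer $\sigma$-algebra than the semantic ones; this holds under cosine similarity too, since the hybrid direction determines the semantic direction. So no small-$\alpha$ argument is needed in that regime.

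Outside that regime, your order count is wrong. Rescaling the semantic block alone changes no $K$-NN set. The CLIP block enters the squared distance, or the cosine inner product, as $\alpha^{2}\|\cdot\|^{2}$ against $(1-\alpha)^{2}\|\cdot\|^{2}$. Hence the homogeneity gain and the loss of semantic closeness are governed by the same ratio $\alpha^{2}/(1-\alpha)^{2}$, and there is no ``gain of order $\alpha$, penalty of order $\alpha^{2}$'' separation. In fact, with finitely many reference points and distinct semantic similarities, every sufficiently small $\alpha>0$ reproduces the semantic $K$-NN set exactly. The biases then differ only through an $O(\alpha^{2})$ change of kernel weights of uncontrolled sign, and they coincide under uniform weights.

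Lipschitz continuity of $g$ in $z$ also says nothing about how $\mathbb{E}[g(z)\mid c]$ varies with the coordinates $c$, which is what your smoothing term actually needs. What your approach can establish is the asymptotic statement $\lim (B_K^{\mathrm{hyb}})^{2}<\lim (B_K^{\mathrm{Sem}})^{2}$ for each fixed $\alpha\in(0,1)$, under your context model plus the explicit mean-shift hypothesis. It does not give the finite-$K$ inequality as stated.
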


\begin{proof}
The local bias $B_K$ arises from variation of $w^{*}$ within the neighbourhood (\cref{thm:local_global}).
In pure semantic space, two pairs may have similar $\Delta_{\text{Sem}}$ but very different visual contexts (e.g., suburban vs.\ downtown), leading to different local weights.
If $I(y;\phi_{\text{CLIP}}|\Delta_{\text{Sem}}) > 0$, CLIP features distinguish these contexts.
The hybrid distance groups pairs that are similar in \emph{both} visual appearance and semantic content, selecting neighbours with more homogeneous local weight structures.
Formally, for any partition of the manifold into regions where $w^{*}$ is approximately constant, the hybrid neighbourhood has higher probability of staying within a single region, reducing $B_K$.

Empirically, $\alpha{=}0.3$ (70\% semantic, 30\% CLIP) is optimal on average (\cref{tab:alpha_sweep}), confirming that semantic scores dominate but CLIP features provide complementary neighbourhood quality.
\end{proof}

\subsection{Integrated Framework Guarantee}
\label{sec:theory_integrated}

Combining the above results yields the following end-to-end characterisation.

\begin{corollary}[End-to-End Alignment Bound]\label{cor:e2e}
Under the conditions of \cref{thm:calibration,thm:local_global} and \cref{prop:multiagent,prop:hybrid}, the UrbanAlign pipeline achieves alignment accuracy:
\begin{equation}
\mathrm{Acc} \;\geq\; \Phi\!\!\left(\frac{\|w^{*}\|\cdot\bar{\delta}}{\sqrt{\sigma_{\mathrm{LWRR}}^2 + \sigma_S^2/\eta_M}}\right),
\label{eq:integrated}
\end{equation}
where $\Phi$ is the standard normal CDF, $\bar{\delta}$ is the average dimension discriminability, $\sigma_{\mathrm{LWRR}}^2 = O(\sigma^2 n/K)$ is the LWRR estimation variance (\cref{thm:calibration}), and $\sigma_S^2/\eta_M$ is the multi-agent-reduced score noise (\cref{prop:multiagent} with $\eta_M = 1+2(1{-}\rho)$).

Each pipeline stage tightens a specific term:
\begin{itemize}
\item \textbf{Stage~1} (concept mining) maximises $\bar{\delta}$ by discovering highly discriminative dimensions;
\item \textbf{Stage~2} (multi-agent scoring) minimises $\sigma_S^2/\eta_M$ via deliberative aggregation;
\item \textbf{Stage~3} (LWRR calibration) minimises $\sigma_{\mathrm{LWRR}}^2$ and $B_K$ through local hybrid-space fitting;
\item \textbf{Stage~4} (E2E optimisation) jointly optimises $\bar{\delta}$ and the dimension set $\mathcal{D}_c$.
\end{itemize}
\end{corollary}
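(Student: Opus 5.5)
The plan is to reduce the corollary to a one-dimensional signal-to-noise computation for the sign of the calibrated score difference $\hat{\delta}_q = \hat{w}^{\top}\widetilde{\Delta}_{\mathrm{Sem}}^{q}$. Here $\widetilde{\Delta}_{\mathrm{Sem}}^{q} = \Delta_{\mathrm{Sem}}^{q} + \xi_q$ is the observed semantic differential, and $\xi_q$ is the VLM scoring noise. I encode the human label as $y_q\in\{\pm1\}$, excluding ``equal'' as in the notation of \cref{sec:theory}. Then
\[
y_q\hat{\delta}_q \;=\; y_q\, w^{*\top}\Delta_{\mathrm{Sem}}^{q} \;+\; y_q\,(\hat{w}-w^{*})^{\top}\widetilde{\Delta}_{\mathrm{Sem}}^{q} \;+\; y_q\, w^{*\top}\xi_q ,
\]
and accuracy is $P(y_q\hat{\delta}_q>0)$. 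The three terms are the \emph{signal}, the \emph{calibration error} and the \emph{score noise}. Each one will be controlled by exactly one earlier result, which mirrors the stage-by-stage list in the statement.

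\textbf{Step 1 (signal).} I will lower-bound the mean margin $\mathbb{E}[y_q w^{*\top}\Delta_{\mathrm{Sem}}^{q}]$ by $\|w^{*}\|\,\bar{\delta}$. This needs a normalisation assumption that I will state explicitly. I will read $\bar{\delta}$ as the centred average discriminability, with the signed per-dimension differentials normalised to unit scale as in \cref{eq:hybrid_diff}. I will also take $w^{*}$ aligned with the vector of per-dimension signed means. Under these assumptions Cauchy--Schwarz gives the projection onto $w^{*}/\|w^{*}\|$. This is also where I would invoke \cref{prop:hybrid} and \cref{thm:local_global}: they justify replacing the locally varying $w^{*}(q)$ by the neighbourhood weight $w^{*}_{\mathcal{N}_K}$ up to a bias $B_K$. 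I will absorb $B_K$ either into $\sigma_{\mathrm{LWRR}}^2$ or into a slightly reduced effective $\bar{\delta}$.

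\textbf{Step 2 (noise).} For the calibration-error term, I condition on the bounded differential and apply \cref{thm:calibration}. Its variance is at most $\mathbb{E}\|\hat{w}-w^{*}\|^2\,\mathbb{E}\|\widetilde{\Delta}\|^2 = O(\sigma^2 n/K)$, which I call $\sigma_{\mathrm{LWRR}}^2$. For the score-noise term, \cref{prop:multiagent} gives a variance of at most $\|w^{*}\|^2\sigma_S^2/\eta_M$. I will normalise $\|w^{*}\|$ into the numerator, consistent with how the bound is written. The two noise terms are independent, because $\hat{w}$ is fitted on $\mathcal{D}_{\mathrm{ref}}$ while $\xi_q$ lives on the disjoint test pair. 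So the variances add, which gives the denominator $\sqrt{\sigma_{\mathrm{LWRR}}^2+\sigma_S^2/\eta_M}$.

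\textbf{Step 3 (tail bound).} If the total noise is Gaussian (or symmetric and sub-Gaussian), then $P(y_q\hat{\delta}_q>0)=\Phi(\text{mean}/\text{sd})$, and substituting Steps 1--2 yields \cref{eq:integrated}. The main obstacle is exactly this step. Without a Gaussian assumption, the only distribution-free tool is Cantelli's inequality, which gives $1-\sigma^2/(\sigma^2+m^2)$ rather than $\Phi(m/\sigma)$. I would therefore first try to state the corollary under an explicit Gaussian-noise hypothesis on $\epsilon_i$ and $\xi_q$. In that case $y_q\hat{\delta}_q$ is conditionally Gaussian given $X$, and the bound is exact up to the inequalities in Steps 1--2. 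Failing that, I would appeal to a CLT over the $n$ dimensions and $K$ neighbours, and flag the result as asymptotic. Step 1's identification of the mean margin with $\|w^{*}\|\bar{\delta}$ is the secondary weak point, and I will make its assumptions explicit.
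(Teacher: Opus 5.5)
Your decomposition is the paper's own argument made explicit. The paper's proof posits a ``linear-Gaussian local model'', gives $\hat y=\hat w^{\top}\Delta_{\mathrm{Sem}}$ mean $w^{*\top}\Delta_{\mathrm{Sem}}$ and variance $\sigma_{\mathrm{LWRR}}^2+\|w^{*}\|^2\sigma_S^2/\eta_M$, and sets the accuracy to $\Phi(\mathrm{SNR})$ with a numerator that ``scales with'' $\|w^{*}\|\bar\delta$. The main text itself calls the same expression ``approximate rather than a formal bound''. You improve on it by stating the Gaussian hypothesis, aligning $w^{*}$ with the signed means, and centring $\bar\delta$. The centring is genuinely needed: with $\delta_j=P(\mathrm{sign}\,\Delta_{\mathrm{Sem},j}=y)$, chance-level dimensions would already give a positive numerator.

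\textbf{The gap is in Step~3}, and the paper's proof has it too. The signal $M_q=y_q w^{*\top}\Delta^{q}_{\mathrm{Sem}}$ is not a constant; it varies over test pairs, and its spread never enters your denominator. With conditionally Gaussian noise of standard deviation $s$, the accuracy is $\mathbb{E}_q[\Phi(M_q/s)]$, not $\Phi(\mathbb{E}[M_q]/s)$. Since $\Phi$ is concave on $[0,\infty)$, Jensen goes the wrong way: for $M_q\in\{0,2s\}$ equiprobable, $\tfrac12(\tfrac12+\Phi(2))\approx0.74<\Phi(1)\approx0.84$. Treating $y_q\hat\delta_q$ as a single Gaussian instead puts $\mathrm{Var}_q(M_q)$ under the square root.

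No sharpening of Steps~1--2 can rescue the stated form. Let both noise terms vanish ($\sigma_S\to0$, $K\to\infty$). The right-hand side tends to $\Phi(+\infty)=1$ whenever the numerator is positive. The accuracy tends to $P(M_q>0)$, the accuracy of the population rule $\mathrm{sign}(w^{*\top}\Delta_{\mathrm{Sem}})$. That is below $1$ whenever the rule errs on pairs of positive probability, which is the generic case when the $\delta_j$ lie strictly between $0$ and $1$ (empirically 42--71\%). You need either a constant-margin hypothesis or, under a Gaussian model for $M_q$ as well, the extra term $\mathrm{Var}_q(M_q)=w^{*\top}\mathrm{Cov}(y\,\Delta_{\mathrm{Sem}})\,w^{*}$ under the square root. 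Either changes the statement.

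Three further steps need repair.

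\emph{The $\|w^{*}\|$ normalisation.} ``Normalising $\|w^{*}\|$ into the numerator'' is not an admissible move. What you derive is $\|w^{*}\|\bar\delta/\sqrt{\sigma_{\mathrm{LWRR}}^2+\|w^{*}\|^2\sigma_S^2/\eta_M}$. This dominates the stated argument of $\Phi$ only when $\|w^{*}\|\le1$, so that must be assumed. The paper's proof and statement disagree in exactly this way.

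\emph{Step~2 mixes two incompatible models.} You attach $w^{*}$ to the clean differentials $\Delta$ but fit $\hat w$ on the noisy reference regressors $\widetilde\Delta_i$. The reference model is then $y_i^{\mathrm{TS}}=w^{*\top}\widetilde\Delta_i+(\epsilon_i-w^{*\top}\xi_i)$, with noise correlated with the design. This is an errors-in-variables problem: it attenuates $\hat w$, and \cref{thm:calibration} does not apply. If instead $w^{*}$ acts on the observed differentials, the separate term $w^{*\top}\xi_q$ double-counts the score noise.

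\emph{Independence and shrinkage.} Disjointness of $\mathcal{D}_{\mathrm{ref}}$ from the test pair makes $\hat w$ independent of $\xi_q$. It does not make the calibration term independent of $w^{*\top}\xi_q$, because that term contains $\xi_q$ through $\widetilde\Delta^{q}$. Even where \cref{thm:calibration} applies, for $\lambda>0$ the calibration term has conditional mean $-\lambda\,y_q w^{*\top}H_\lambda^{-1}\widetilde\Delta^{q}\neq0$. This ridge-shrinkage bias shifts the mean margin, so it belongs in the numerator, not folded into a second-moment bound in the denominator.
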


\begin{proof}
Under the linear-Gaussian local model, $\hat{y} = \hat{w}^{\!\top}\Delta_{\text{Sem}}$ has mean $w^{*\!\top}\Delta_{\text{Sem}}$ and variance $\sigma_{\mathrm{LWRR}}^2 + \|w^{*}\|^2\sigma_S^2/\eta_M$ (from LWRR estimation noise and dimension score noise, respectively).
The pair is correctly classified when $\mathrm{sign}(\hat{y}) = y$, i.e., $\hat{y}$ has the correct sign.
For a linear discriminant with signal-to-noise ratio $\mathrm{SNR} = |w^{*\!\top}\mathbb{E}[\Delta]| / \mathrm{Var}(\hat{y})^{1/2}$, the accuracy is $\Phi(\mathrm{SNR})$.
The numerator scales with $\|w^{*}\|\cdot\bar{\delta}$ (the weighted discriminability); the denominator collects the noise terms, each tightened by the corresponding stage.
\end{proof}

\subsection{E2E Dimension Optimization: Convergence and Sample Complexity}
\label{sec:theory_e2e}

\noindent\textbf{Information-bottleneck interpretation.}
Each dimension set $\mathcal{D}_c^{(t)}$ defines a different compression of the visual input through the lens of the information bottleneck~\citep{tishby2000information}.
Let $X$ denote the raw visual input and $Y$ the human label.
Stage~2 compresses $X$ into dimension scores $S^{(t)}=f_{\mathcal{D}_c^{(t)}}(X)$, and Stage~3 maps $S^{(t)}$ to prediction $\hat{Y}$.
The optimal dimension set maximises $I(\hat{Y}; Y)$ subject to the constraint that the bottleneck representation $S^{(t)}$ remains compact ($n \in [5,10]$ dimensions).
\Cref{eq:e2e_supp} selects the compression that maximises this downstream mutual information empirically.

\smallskip\noindent\textbf{Convergence guarantee.}
Let $\mathcal{F}$ denote the (finite) set of dimension sets reachable by the VLM at any temperature in $[\tau_{\min}, \tau_{\max}]$.
Although $|\mathcal{F}|$ is large (combinatorially many possible dimension names and descriptions), the VLM's generative distribution concentrates probability mass on a much smaller effective set $\mathcal{F}_{\mathrm{eff}} \subset \mathcal{F}$ of semantically coherent dimension sets.
If we model each trial as drawing independently from the VLM's distribution over $\mathcal{F}_{\mathrm{eff}}$, then after $T$ trials the probability of missing the optimal set $\mathcal{D}_c^*$ satisfies:
\begin{equation}
\Pr\bigl[\mathcal{D}_c^* \notin \{\mathcal{D}_c^{(1)}, \ldots, \mathcal{D}_c^{(T)}\}\bigr] \leq (1 - p^*)^T,
\label{eq:convergence_theory}
\end{equation}
where $p^* = \Pr_{\mathrm{VLM}}[\mathcal{D}_c^{(t)} = \mathcal{D}_c^*]$ is the probability of generating the optimal set in a single trial.
For a near-optimal set (within $\epsilon$ accuracy of the best), this probability is typically higher.
Concretely, if the top-$k$ dimension sets are each generated with probability $\geq p_{\min}$, then $T \geq \lceil \ln(1/\delta) / \ln(1/(1-k \cdot p_{\min})) \rceil$ trials suffice to find at least one with probability $\geq 1-\delta$.

\smallskip\noindent\textbf{Search strategy: explore $\to$ converge as exploration--exploitation.}
The two-phase temperature schedule implements a structured search that balances two objectives:
\begin{itemize}
    \item \emph{Exploration} (explore phase, $\tau{=}0.85 \to 1.0$): higher temperatures increase sampling entropy, producing diverse dimension sets that broadly cover the VLM's generative distribution. Each category independently accumulates its best set, with the current best provided as a soft reference.
    \item \emph{Exploitation} (converge phase, $\tau{=}0.7 \to 0.5$): lower temperatures with \emph{mutation mode} (retain all but 1--2 dimensions from the per-category best) concentrate the search around known-good dimension sets, refining them locally.
\end{itemize}
This is analogous to random search with a guided prior~\citep{bergstra2012random}: rather than sampling uniformly from the combinatorial space of possible dimensions, we leverage the VLM's own generative distribution as an implicit prior over useful dimension sets.
Unlike Bayesian optimisation~\citep{snoek2012practical}, which would require a surrogate model over the discrete, variable-length dimension space, our approach avoids the need for a surrogate entirely.

\smallskip\noindent\textbf{Sample complexity for evaluation.}
Each trial evaluates alignment accuracy on a fixed sample of $m$ pairs per category.
By Hoeffding's inequality, the empirical accuracy $\widehat{\mathrm{Acc}}$ satisfies $|\widehat{\mathrm{Acc}} - \mathrm{Acc}| \leq \sqrt{\ln(2T/\delta)/(2m)}$ uniformly over all $T$ trials with probability $\geq 1-\delta$.
For $T{=}12$ and $m{\approx}160$ test pairs per category (average across categories), this gives a confidence interval of $\pm 13.9$\% at $\delta{=}0.05$.
While wider than ideal, the observed accuracy gaps between trials are substantial (e.g., safety: 35.3\%--76.5\%, a 41.2\,pp range), confirming that the evaluation sample is sufficient to distinguish meaningfully different dimension sets.

\smallskip\noindent\textbf{Connection to LWRR convergence.}
The end-to-end accuracy depends not only on the dimension set but also on the LWRR calibration quality.
For regularised least-squares with $K$ local neighbours in $n$ dimensions, the excess risk decays as $\mathcal{O}(\lambda + n/(K\lambda))$~\citep{caponnetto2007optimal,hoerl1970ridge}, which is minimised at $\lambda^* \propto (n/K)^{1/2}$.
In our setting ($n{=}5$--$8$, $K{=}20$), the system is well-overdetermined ($K \gg n$), so the LWRR calibration error is small regardless of the dimension set, confirming that the E2E accuracy variation across trials is driven primarily by the quality of the dimension-level representation, not by the calibration algorithm.

\section{End-to-End Dimension Optimization}
\label{sec:e2e}

The semantic dimensions discovered in Stage~1 depend on the VLM's sampling temperature and the particular consensus exemplars presented.
A natural question is: \emph{can we automatically find the dimension set that maximises end-to-end alignment accuracy?}

\smallskip\noindent\textbf{Formulation.}
Let $\mathcal{D}_c^{(t)}$ denote the dimension set generated at trial $t$ for category $c$.
Each trial runs the full pipeline: dimension extraction (Stage~1) $\to$ multi-agent pairwise scoring (Stage~2, Mode~4) $\to$ LWRR alignment (Stage~3) $\to$ accuracy evaluation.
The per-category optimal dimension set is:
\begin{equation}
\mathcal{D}_c^{*} = \arg\max_{t \in \{1,\ldots,T\}} \;\mathrm{Acc}_c\bigl(\text{LWRR}\bigl(\text{Score}(\mathcal{D}_{\mathrm{val}}^{c},\, \mathcal{D}_c^{(t)}),\, \mathcal{D}_{\mathrm{ref}}^{c}\bigr)\bigr).
\label{eq:e2e_supp}
\end{equation}

\smallskip\noindent\textbf{Two-phase search strategy.}
Rather than monotonic temperature increase, we employ an \emph{explore $\to$ converge} schedule over $T{=}15$ trials (with patience-based early stopping after 5 consecutive trials without any per-category improvement):
\begin{itemize}
    \item \emph{Explore phase} (trials 0--5, $\tau_{\mathrm{gen}}{=}0.85 \to 1.0$): Higher temperatures produce diverse, largely independent dimension sets. Each category independently accumulates its best. The current per-category best is provided as a reference but large deviations are encouraged.
    \item \emph{Converge phase} (trials 6+, $\tau_{\mathrm{gen}}{=}0.7 \to 0.5$): Lower temperatures with \emph{mutation mode}: per-category best dimensions are preserved except for 1--2 targeted replacements, enabling local refinement.
\end{itemize}
Since categories may peak at different trials, the final output \emph{assembles} per-category optima: $\{\mathcal{D}_c^{*}\}_{c=1}^{6}$, potentially sourced from different trials.

\smallskip\noindent\textbf{Cost analysis.}
\Cref{tab:cost} summarises the full API cost breakdown.
All VLM calls use GPT-4o with measured per-call token usage of 778 input + 278 output tokens, yielding a cost of \$0.0047/call at OpenAI's published rates (\$2.50\,/\,\$10.00 per 1M input/output tokens).

\begin{table}[H]
\centering
\caption{API cost breakdown for the full experimental pipeline (4,819 pairs across 6 categories, GPT-4o pricing). Stage~3 LWRR is purely local computation with zero API cost.}
\label{tab:cost}
\small
\begin{tabular}{llrrr}
\toprule
Component & Calls/pair & Total calls & Tokens (M) & Cost (USD) \\
\midrule
\multicolumn{5}{l}{\textit{Main pipeline (4,819 pairs $\times$ 6 categories)}} \\
\quad Mode 1 (Single-Image Direct)      & 2 & 9,638  & 10.2 & \$45.59 \\
\quad Mode 2 (Pairwise Direct)          & 1 & 4,819  & 5.1 & \$22.79 \\
\quad Mode 3 (Single-Image Multi-Agent) & 6 & 28,914 & 30.5 & \$136.76 \\
\quad Mode 4 (Pairwise Multi-Agent)     & 3 & 14,457 & 15.3 & \$68.38 \\
\midrule
\multicolumn{5}{l}{\textit{End-to-end dimension optimization (20\% subsample, Mode~2, 12 trials)}} \\
\quad Stage~1 dimension generation      & --- & 72    & 0.1 & \$0.34 \\
\quad Stage~2 scoring                   & 1   & ${\sim}$11,600 & 12.2 & \$54.72 \\
\midrule
\textbf{Total (all 4 modes + E2E)}      &     & \textbf{69,500} & \textbf{73.4} & \textbf{\$328.58} \\
Mode~4 only + E2E                        &     & 26,129 & 27.6 & \$123.44 \\
\midrule
\multicolumn{5}{l}{\textit{Production projection (full dataset ${\sim}$164K pairs, Mode~4)}} \\
\quad Mode~4 at scale                   & 3 & 492,900 & 520.6 & ${\sim}$\$2,329 \\
\bottomrule
\end{tabular}
\\[4pt]
\footnotesize{Per-call: 778 input + 278 output = 1,056 tokens. Input: \$2.50/1M, Output: \$10.00/1M $\Rightarrow$ \$0.0047/call. Traditional crowdsourcing comparison: Place Pulse~2.0 collected 1.17M annotations at ${\sim}$\$0.14/comparison $\approx$ \$167K, yielding a \textbf{98.6\%} cost reduction at production scale.}
\end{table}

\begin{table}[H]
\centering
\caption{E2E dimension optimization: per-category accuracy (\%, excluding ``equal'') across 12 trials (all six categories, Mode~4). \textbf{Bold} marks the per-category best. Trials 0--5 are \emph{explore} phase; 6--11 are \emph{converge} phase.}
\label{tab:e2e}
\small
\setlength{\tabcolsep}{3.5pt}
\begin{tabular}{cccccccccc}
\toprule
Trial & Phase & $\tau$ & Safety & Beaut. & Lively & Wealthy & Boring & Depress. & Avg \\
\midrule
0  & Explore  & 0.85 & 58.8 & 61.5 & 47.1 & 55.6 & 47.1 & 50.0 & 53.3 \\
1  & Explore  & 0.88 & \textbf{76.5} & 46.2 & 47.1 & \textbf{77.8} & 35.3 & \textbf{68.8} & 58.6 \\
2  & Explore  & 0.91 & 70.6 & \textbf{69.2} & 35.3 & 61.1 & 35.3 & 56.3 & 54.6 \\
3  & Explore  & 0.94 & 64.7 & 53.8 & 47.1 & 50.0 & 35.3 & 62.5 & 52.2 \\
4  & Explore  & 0.97 & 35.3 & 53.8 & 52.9 & 72.2 & 47.1 & 31.3 & 48.8 \\
5  & Explore  & 1.00 & 58.8 & 46.2 & 52.9 & 55.6 & 47.1 & 56.3 & 52.8 \\
\midrule
6  & Converge & 0.70 & 70.6 & 46.2 & 52.9 & 44.4 & 35.3 & 37.5 & 47.8 \\
7  & Converge & 0.68 & 52.9 & 53.8 & \textbf{58.8} & 72.2 & 52.9 & 50.0 & 56.8 \\
8  & Converge & 0.65 & 58.8 & 61.5 & 41.2 & 77.8 & 17.6 & 50.0 & 51.2 \\
9  & Converge & 0.63 & 58.8 & 53.8 & 47.1 & 44.4 & 47.1 & 43.8 & 49.2 \\
10 & Converge & 0.60 & 52.9 & 46.2 & 52.9 & 44.4 & \textbf{64.7} & 50.0 & 51.9 \\
11$^\dagger$ & Converge & 0.58 & 41.2 & 50.0 & --- & --- & --- & --- & 45.6 \\
\midrule
\multicolumn{3}{l}{\textbf{Assembled best}} & \textbf{76.5} & \textbf{69.2} & \textbf{58.8} & \textbf{77.8} & \textbf{64.7} & \textbf{68.8} & \textbf{69.3} \\
\bottomrule
\end{tabular}
\\[4pt]
\footnotesize{$^\dagger$Trial 11 scored only 2/6 categories (early stopping triggered). Assembled best: per-category maximum across all trials. --- = category not scored.}
\end{table}

\begin{table}[H]
\centering
\caption{Per-category assembled best dimensions from E2E optimization (default calibration hyperparameters: $K{=}20$, $\alpha{=}0.3$, sel${=}1.0$). Each category's optimal dimensions may come from a different trial, enabling independent specialisation. Note: these results differ from Table~1, which uses per-category \emph{optimised hyperparameters} with original dimensions (72.2\% avg); here only the \emph{dimension set} is optimised (69.3\% avg). The two optimisation axes are complementary.}
\label{tab:e2e_assembly}
\small
\begin{tabular}{lccccp{5.6cm}}
\toprule
Category & Trial & Phase & Acc (\%) & $\kappa$ & Optimised Dimensions \\
\midrule
Safety & 1 & Explore & 76.5 & 0.534 & Lighting Adequacy, Pedestrian Infrastructure, Building Maint., Street Activity, Visibility Clarity, Greenery \& Landscaping \\
Beautiful & 2 & Explore & 69.2 & 0.458 & Greenery \& Nat.\ Elements, Street Cleanliness, Arch.\ Coherence, Color Coordination, Pedestrian Amenities, Visual Complexity \\
Lively & 7 & Converge & 58.8 & 0.308 & Human Presence, Vegetation \& Greenery, Building Diversity, Street Furniture, Commercial Activity, Color \& Lighting, Public Art \& Signage \\
Wealthy & 1 & Explore & 77.8 & 0.579 & Fa\c{c}ade Quality, Vegetation Maint., Pavement Integrity, Vehicle Quality, Building Modernity, Infrastructure Cond., Street Cleanliness, Lighting Quality \\
Boring & 10 & Converge & 64.7 & 0.407 & Visual Complexity, Activity Presence, Arch.\ Diversity, Vegetation Variety, Color Vibrancy, Street Furniture Presence, Spatial Enclosure \\
Depressing & 1 & Explore & 68.8 & 0.437 & Fa\c{c}ade Condition, Street Cleanliness, Greenery Presence, Lighting Quality, Visual Clutter \\
\midrule
\multicolumn{3}{l}{\textbf{Average}} & \textbf{69.3} & \textbf{0.454} & --- \\
\bottomrule
\end{tabular}
\end{table}

\smallskip\noindent\textbf{Results.}
\Cref{tab:e2e} reports trial-by-trial accuracy for all six categories.
The single best trial (Trial~1, explore phase, $\tau_{\mathrm{gen}}{=}0.88$) achieves 58.6\% average accuracy, but per-category assembly (\cref{tab:e2e_assembly}) raises this to \textbf{69.3\%} ($\kappa{=}0.454$), a +10.7\,pp gain that demonstrates categories have distinct optimal dimension sets.

Three key findings emerge:

\noindent(1)~\textbf{Explore phase dominates}: 4/6 categories (safety, beautiful, wealthy, depressing) find their best dimensions during exploration (trials 0--5). Trial~1 ($\tau{=}0.88$) is particularly productive, contributing optima for three categories.

\noindent(2)~\textbf{Converge phase is complementary}: \emph{Lively} peaks at trial~7 and \emph{boring} at trial~10, both in the converge phase. For these categories, the explore phase produced dimensions below 53\%, while targeted mutation in the converge phase lifted accuracy to 58.8\% and 64.7\% respectively.

\noindent(3)~\textbf{High variance across trials}: Per-category accuracy fluctuates substantially across trials (e.g., \emph{safety}: 35.3\%--76.5\%; \emph{boring}: 17.6\%--64.7\%), confirming that dimension set quality is a first-order determinant of alignment performance and justifying the automated search.

The assembled average of 69.3\% represents a +10.9\,pp gain over the default pipeline (58.4\%) reported in the main text, achieved without changing any algorithmic hyperparameter; only the semantic dimensions are optimised.
The full theoretical analysis of E2E convergence, sample complexity, and connection to LWRR is provided in \cref{sec:theory_e2e}.


\subsection{Full $2{\times}2$ Factorial Design Across All Categories}
\label{sec:factorial_all}

The main text (\cref{tab:factorial}) reports the $2{\times}2$ factorial averaged across all six categories.
\Cref{tab:factorial_all} presents the full per-category breakdown, confirming that the synergistic interaction between pairwise context and multi-agent deliberation is consistent across all categories, including abstract ones (\emph{boring}, \emph{lively}).

\begin{table}[H]
\centering
\caption{$2{\times}2$ factorial design across all six categories (accuracy \%, post-LWRR, excl.\ ``equal''). Mode~4 (pairwise + multi-agent) consistently outperforms all other modes. ``Synergy'' = Mode~4 $-$ Mode~1 $-$ (sum of individual effects).}
\label{tab:factorial_all}
\small
\setlength{\tabcolsep}{3pt}
\begin{tabular}{l|cccc|ccc}
\toprule
 & Mode~1 & Mode~2 & Mode~3 & Mode~4 & MA Gain & PW Gain & Mode4$-$1 \\
 & (SI+SS) & (PW+SS) & (SI+MA) & (PW+MA) & (SI) & (MA) & \\
\midrule
Safety      & 45.2 & 59.5 & 61.4 & \textbf{81.6} & +16.2 & +20.2 & +36.4 \\
Beautiful   & 51.4 & 57.1 & 52.9 & \textbf{69.8} & +1.4  & +16.9 & +18.3 \\
Lively      & 58.0 & 50.6 & 51.9 & \textbf{69.4} & $-$6.2  & +17.5 & +11.4 \\
Wealthy     & 55.4 & 42.2 & 63.9 & \textbf{74.0} & +8.4  & +10.1 & +18.6 \\
Boring      & 47.3 & 51.4 & 50.0 & \textbf{70.2} & +2.7  & +20.2 & +22.9 \\
Depressing  & 48.0 & 50.7 & 55.4 & \textbf{68.2} & +7.4  & +12.8 & +20.2 \\
\midrule
\textbf{Average} & 50.9 & 51.9 & 55.9 & \textbf{72.2} & +5.0  & +16.3 & +21.3 \\
\bottomrule
\end{tabular}
\\[4pt]
\footnotesize{SI = Single-Image, PW = Pairwise, SS = Single-Shot, MA = Multi-Agent. ``MA Gain (SI)'' = Mode~3 $-$ Mode~1; ``PW Gain (MA)'' = Mode~4 $-$ Mode~3.}
\end{table}

\noindent\textbf{Key findings across all categories:}
\begin{itemize}
\item \textbf{Multi-agent reasoning} improves accuracy in 5/6 categories on single-image inputs (exception: \emph{lively} $-$6.2\,pp) and in \emph{all} 6/6 categories on pairwise inputs (+12.6 to +31.8\,pp).
\item \textbf{Pairwise context without deliberation} (Mode~1$\to$2) hurts on \emph{wealthy} ($-$13.2\,pp) and \emph{lively} ($-$7.4\,pp), where dual-image noise overwhelms single-shot reasoning.
\item \textbf{The synergy} between pairwise context and multi-agent reasoning is \emph{universal}: Mode~4 exceeds Mode~1 by +11.4 to +36.4\,pp across all six categories, always exceeding the sum of individual factor effects.
\item \textbf{Abstract vs.\ concrete}: Safety (+36.4\,pp) benefits most, consistent with its well-defined visual cues; boring (+22.9\,pp) and depressing (+20.2\,pp) show strong synergy despite lacking concrete indicators, confirming that structured deliberation is especially valuable for subjective dimensions.
\end{itemize}

\section{LWRR Alignment Gain Per Mode}
\label{sec:vrm_gain}

\Cref{tab:vrm_gain} reports the accuracy change from raw VLM output to post-LWRR calibration for each of the four factorial modes on \emph{wealthy}.
The all-category Mode~4 LWRR gain is reported in \cref{tab:vrm_gain_main}.

\begin{table}[H]
\centering
\caption{LWRR alignment gain (accuracy \% excluding ``equal'') per mode, \emph{wealthy} category. Mode~4 shows the largest calibration gain (+11.1\,pp).}
\label{tab:vrm_gain}
\small
\begin{tabular}{lcccc}
\toprule
 & Mode~1 & Mode~2 & Mode~3 & Mode~4 \\
\midrule
Raw Acc.\ (\%)     & 48.7 & 63.3 & 58.8 & 62.9 \\
Aligned Acc.\ (\%) & 55.4 & 42.2 & 63.9 & 74.0 \\
$\Delta$ (pp)      & +6.7 & $-$21.1 & +5.1 & \textbf{+11.1} \\
\midrule
Raw $\kappa$       & 0.208 & 0.263 & 0.266 & 0.256 \\
Aligned $\kappa$   & 0.159 & $-$0.037 & 0.313 & \textbf{0.480} \\
$n$ (test pairs)   & 117 & 117 & 117 & 107 \\
\bottomrule
\end{tabular}
\end{table}

Notably, only Mode~4 benefits substantially from LWRR on \emph{wealthy} (+11.1\,pp).
Mode~3 gains modestly (+5.1\,pp), Mode~1 gains slightly (+6.7\,pp), and Mode~2 drops sharply ($-$21.1\,pp), indicating that the combination of pairwise context and multi-agent deliberation is critical for LWRR to learn effective local mappings.

\section{Complete Algorithm Pseudocode}
\label{sec:algorithm}

\begin{algorithm}[H]
\caption{UrbanAlign: Post-hoc VLM Calibration for Urban Perception}
\label{alg:urbanalign}
\small
\begin{algorithmic}[1]
\REQUIRE $\mathcal{D}_{L}$: labelled pairs; $\phi_{\text{CLIP}}$: CLIP encoder; VLM; $K,\tau,\lambda,\alpha,\varepsilon,\theta$
\ENSURE $\mathcal{D}_{\mathrm{aligned}}$: calibrated dataset
\STATE \textbf{// Stage 1: Semantic Dimension Extraction}
\STATE Compute TrueSkill ratings $\{(\mu_i,\sigma_i)\}$ from $\mathcal{D}_{L}$
\STATE Sample $\mathcal{S}_{\mathrm{high}},\mathcal{S}_{\mathrm{low}}$ via consensus criteria
\STATE $\mathcal{D}_c \leftarrow \text{VLM}(\mathcal{S}_{\mathrm{high}}\cup\mathcal{S}_{\mathrm{low}})$ \COMMENT{$n$ dimensions}
\STATE \textbf{// Stage 2: Multi-Agent Feature Distillation}
\STATE Sample $\mathcal{D}_{\mathrm{sample}}\subset\mathcal{D}_{L}$; split into $\mathcal{D}_{\mathrm{ref}}, \mathcal{D}_{\mathrm{val}}, \mathcal{D}_{\mathrm{test}}$
\FOR{each image pair $(x_A,x_B)\in\mathcal{D}_{\mathrm{sample}}$}
  \STATE $\text{obs}\leftarrow\text{VLM}_{\mathrm{Obs}}(x_A,x_B,\mathcal{D}_c)$
  \STATE $\text{debate}\leftarrow\text{VLM}_{\mathrm{Deb}}(x_A,x_B,\text{obs},\mathcal{D}_c)$
  \STATE $S(x_A),S(x_B)\leftarrow\text{VLM}_{\mathrm{Judge}}(x_A,x_B,\text{obs},\text{debate},\mathcal{D}_c)$
\ENDFOR
\STATE \textbf{// Stage 3: LWRR Calibration}
\STATE Build reference manifold $\mathcal{R}$ from $\mathcal{D}_{\mathrm{ref}}$ with mirror augment
\FOR{each $(x_A,x_B)\in\mathcal{D}_{\mathrm{test}}$}
  \STATE Compute $\Delta_q^{\mathrm{hybrid}}$; find $K$-NN $\mathcal{N}_K$ in $\mathcal{R}$
  \STATE Solve LWRR (\cref{eq:lwrr}); compute $\hat{\delta}, R^{2}$
  \STATE Re-infer $\hat{y}$ via \cref{eq:reinfer}
\ENDFOR
\RETURN $\mathcal{D}_{\mathrm{aligned}}$
\end{algorithmic}
\end{algorithm}

\section{Related Human Preference Datasets}
\label{sec:preference_datasets}

For reference, we survey publicly available human preference datasets that share the pairwise comparison structure used in this work.
\Cref{tab:preference_datasets} organises these datasets by domain.
This survey is intended as contextual background rather than a claim of demonstrated generalisability; validating the concept-mining--calibration paradigm on additional domains remains future work.

\paragraph{Urban \& scene perception.}
Place Pulse~2.0~\citep{salesses2013collaborative}, used in this work, collects pairwise comparisons across six urban perception dimensions.
SPECS~\citep{quintana2025specs} extends the paradigm to 10 perception dimensions and records annotator demographics and personality traits.
ScenicOrNot collects absolute scenicness ratings for 200K+ locations across Great Britain.

\paragraph{Image generation \& aesthetic quality.}
HPD~v2 (798K pairs) and Pick-a-Pic (500K+ pairs) collect pairwise preferences on text-to-image outputs; ImageReward adds expert-annotated alignment/fidelity/harmlessness dimensions.
AVA~\citep{murray2012ava} provides crowd-sourced aesthetic ratings for 250K photographs.

\paragraph{Image quality assessment (IQA).}
KonIQ-10k and PIPAL provide mean opinion scores and pairwise comparisons for natural image quality.
AGIQA-3K focuses on AI-generated image quality with multi-attribute annotations.

\paragraph{Text \& multimodal preference (for context).}
HH-RLHF, SHP, and Chatbot Arena represent the RLHF paradigm, which aligns model outputs to human preferences by training reward models on pairwise comparisons. These methods use the same data structure but take a fundamentally different (training-based) approach.

\begin{table}[H]
\centering
\caption{Public human preference datasets. ``Pairwise'' indicates the native annotation format uses pairwise comparisons (the same format as UrbanAlign). Datasets marked with $\star$ are the most direct candidates for UrbanAlign transfer experiments.}
\label{tab:preference_datasets}
\small
\setlength{\tabcolsep}{3pt}
\begin{tabular}{@{}llrlp{4.2cm}@{}}
\toprule
\textbf{Dataset} & \textbf{Domain} & \textbf{Scale} & \textbf{Format} & \textbf{Access} \\
\midrule
\multicolumn{5}{@{}l}{\textit{Urban \& Scene Perception}} \\
Place Pulse 2.0$\star$ & Urban 6-dim  & 1.17M pairs & Pairwise & \texttt{pulse.media.mit.edu} \\
SPECS$\star$            & Urban 10-dim & 10 dims + demo.\ & Pairwise & \texttt{ual.sg/project/specs} \\
ScenicOrNot$\star$      & Scenicness   & 200K+ locs  & Rating   & \texttt{scenic.mysociety.org} \\
Streetscore             & Safety       & Boston/NYC  & Rating   & MIT Media Lab \\
\midrule
\multicolumn{5}{@{}l}{\textit{Image Generation Quality}} \\
HPD v2                  & T2I pref.    & 798K pairs  & Pairwise & \texttt{github.com/tgxs002/HPSv2} \\
Pick-a-Pic              & T2I pref.    & 500K+ pairs & Pairwise & HuggingFace \\
ImageReward             & T2I expert   & 137K pairs  & Pairwise & \texttt{github.com/zai-org/ImageReward} \\
RichHF-18K              & T2I regions  & 18K images  & Multi-attr.\ & Google Research \\
\midrule
\multicolumn{5}{@{}l}{\textit{Aesthetic \& Image Quality}} \\
AVA                     & Photo aesth. & 250K images & Rating   & \texttt{github.com/mtobeiyf/ava\_downloader} \\
KonIQ-10k               & Natural IQA  & 10K images  & MOS      & \texttt{database.mmsp-kn.de} \\
PIPAL                   & Perceptual IQA & 29K pairs & Pairwise & ECCV 2020 \\
AGIQA-3K                & AI-gen.\ IQA & 3K images   & Multi-attr.\ & NeurIPS 2024 \\
\midrule
\multicolumn{5}{@{}l}{\textit{Text \& Multimodal (RLHF, for context)}} \\
HH-RLHF                & Text pref.   & 170K pairs  & Pairwise & \texttt{github.com/anthropics/hh-rlhf} \\
SHP                     & Reddit pref. & 385K pairs  & Pairwise & HuggingFace \\
Chatbot Arena           & LLM pref.    & Ongoing     & ELO rank & LMSYS \\
\bottomrule
\end{tabular}
\end{table}

\section{Open-Source Backbone Generalization}
\label{sec:backbone}

To test whether the framework depends on the proprietary GPT-4o backbone, we replace it with the open-source \textbf{Qwen2.5-VL-72B-Instruct}~\citep{bai2025qwen25vl} (72 billion parameters, Apache-2.0 licence) and re-run the full pipeline (Stages~1--5) with identical dimensions, sampling, and data splits.
Only the Stage~3 calibration hyperparameters are re-optimised via the same grid search as the GPT-4o pipeline (\cref{sec:sensitivity_full}).

\begin{table}[H]
\centering
\caption{GPT-4o vs.\ Qwen2.5-VL-72B backbone comparison (Mode~4, excluding equal labels). ``Raw'' = Stage~2 multi-agent scores only; ``Aligned'' = after Stage~3 LWRR calibration with per-backbone optimised hyperparameters.}
\label{tab:backbone}
\small
\setlength{\tabcolsep}{3.5pt}
\begin{tabular}{l cc cc cc}
\toprule
 & \multicolumn{2}{c}{\textbf{GPT-4o}} & \multicolumn{2}{c}{\textbf{Qwen2.5-VL-72B}} & \multicolumn{2}{c}{\textbf{$\Delta$ (Qwen--GPT)}} \\
\cmidrule(lr){2-3}\cmidrule(lr){4-5}\cmidrule(lr){6-7}
Category & Raw & Aligned & Raw & Aligned & Raw & Aligned \\
\midrule
Safety      & 68.4 & 81.6 & 65.3 & 81.2 & $-$3.1 & $-$0.4 \\
Beautiful   & 63.9 & 69.8 & 63.2 & 70.2 & $-$0.7 & +0.5 \\
Lively      & 58.5 & 69.4 & 53.2 & 66.7 & $-$5.3 & $-$2.7 \\
Wealthy     & 62.9 & 74.0 & 64.5 & 77.1 & +1.6 & +3.1 \\
Boring      & 38.9 & 70.2 & 41.5 & 69.0 & +2.6 & $-$1.2 \\
Depressing  & 42.9 & 68.2 & 37.4 & 70.5 & $-$5.5 & +2.3 \\
\midrule
\textbf{Average} & 55.9 & \textbf{72.2} & 54.2 & \textbf{72.5} & $-$1.7 & \textbf{+0.3} \\
\bottomrule
\end{tabular}
\end{table}

\noindent\textbf{Findings.}
(1)~\emph{Raw scoring parity}: Qwen's raw accuracy (54.2\%) is within 1.7\,pp of GPT-4o (55.9\%), confirming that open-source 72B-parameter VLMs possess comparable urban perception judgement.
(2)~\emph{Calibration equalises backbones}: after LWRR calibration, the gap narrows to 0.3\,pp (72.5\% vs.\ 72.2\%), demonstrating that the locally-adaptive calibration stage absorbs backbone-specific scoring biases.
(3)~\emph{Per-category robustness}: all six categories stay within $\pm$3.1\,pp after calibration, with no systematic advantage for either backbone.
These results establish that \textbf{UrbanAlign is fully reproducible with open-source models}, removing any dependency on proprietary APIs.

\section{Discussion}
\label{sec:discussion}

\noindent\textbf{Why does post-hoc calibration work?}

The central finding is that VLMs are strong \emph{concept extractors} but poor \emph{decision calibrators} for urban perception.
Mode~4 raw dimension scores already contain substantial perceptual information (individual dimensions at 53--71\% discriminative power), yet the VLM's own label predictions are only marginally above the zero-shot baseline.
LWRR solves a simple and intuitive problem: given that dimension scores $S(x)$ are informative, learn the locally-optimal linear mapping from score differences to TrueSkill rating differences.
This is precisely the post-hoc CBM paradigm~\citep{yuksekgonul2023posthoc}: extract concepts from a frozen backbone, then train a lightweight predictor on those concepts.

\smallskip\noindent\textbf{Why local rather than global calibration?}
Urban perception is heterogeneous: the visual cues that signal wealth in a suburban neighbourhood (vegetation, vehicle quality) may differ from those in an urban core (building modernity, infrastructure).
A single global linear model would average over these distinct regimes.
LWRR, by fitting independent weights per neighbourhood, naturally adapts to local manifold geometry.
The sensitivity analysis confirms this: $K_{\max}$ is the most sensitive single parameter (+4.3\,pp range), reflecting the tension between local adaptivity and statistical stability.
Theoretically, the LWRR excess risk at optimal regularisation decays as $\mathcal{O}((n/K)^{1/2})$~\citep{caponnetto2007optimal}, explaining why larger $K$ improves accuracy when $K \gg n$; the per-pair weight analysis in \cref{sec:lwrr_weights} confirms that dimension importances vary substantially across the manifold, validating the need for local fitting.

\smallskip\noindent\textbf{Connection to the information bottleneck.}
Our pipeline implements a form of information bottleneck~\citep{tishby2000information}: Stage~2 compresses visual input $X$ into dimension scores $S{=}f_{\mathcal{D}_c}(X)\in\mathbb{R}^n$ (a compact bottleneck), and Stage~3 maximises the downstream mutual information $I(\hat{Y}; Y)$ via LWRR.
The $R^{2}$ metric quantifies concept completeness, \ie, how much of the TrueSkill variance is explainable by the bottleneck representation.
The end-to-end dimension optimization (\cref{eq:e2e}) formalises this further: selecting the bottleneck that maximises $I(\hat{Y};Y)$ is equivalent to choosing the dimension set with highest alignment accuracy.
Elite seeding biases subsequent trials toward the current-best bottleneck, converting the search from independent sampling into a guided refinement with exponential convergence guarantee.

\smallskip\noindent\textbf{Why Place Pulse and scope of evaluation.}
We evaluate on Place Pulse~2.0 because it provides pairwise annotations across six diverse perception categories, each representing a distinct subjective judgement task with different visual cues.
This internal diversity, spanning concrete attributes (\emph{safety}: infrastructure, lighting) to abstract ones (\emph{boring}: monotony, stimulation deficit), already tests the framework's adaptivity across heterogeneous concept spaces.
The underlying concept-mining--calibration paradigm requires only (i)~a VLM that can describe domain-relevant attributes and (ii)~a small set of pairwise human preferences; we note that this structure is shared by preference tasks in other domains (\eg, aesthetic quality~\citep{murray2012ava}, extended urban perception with demographic metadata~\citep{quintana2025specs}), which we leave for future work.

\smallskip\noindent\textbf{Limitations.}
While we evaluate across all six Place Pulse~2.0 categories, performance varies substantially: \emph{safety} (81.6\%) far exceeds \emph{depressing} (68.2\%) and \emph{lively} (69.4\%).
The per-category sample sizes range from 83 to 260 test pairs after excluding equal labels.
The current approach assumes static street-view images; temporal and cultural variation remain unexplored.

\smallskip\noindent\textbf{Why not fine-tune CLIP as a ranker?}
A natural alternative is to fine-tune CLIP (or a linear head on top) as a pairwise ranker, which would likely surpass the CLIP Siamese baseline.
However, this lies outside UrbanAlign's design scope: our central claim is that a \emph{frozen} VLM already contains sufficient perceptual knowledge, and that post-hoc calibration via interpretable concepts can unlock it without any weight modification.
Fine-tuning sacrifices this property: it requires labelled training data, GPU resources, and produces an opaque ranking function without per-dimension interpretability.
Nevertheless, a fine-tuned CLIP ranker is a strong \emph{complementary} comparison that would further contextualise our results; we plan to include it in the camera-ready version.

\smallskip\noindent\textbf{Ethical considerations.}
Automated perception mapping could inform equitable infrastructure investment but also risks reinforcing neighbourhood stereotypes if applied without community engagement.
We recommend use for improvement-oriented planning (identifying streets needing fa\c{c}ade repair, pavement maintenance) rather than discriminatory rankings.

\smallskip\noindent\textbf{Computational cost.}
The entire pipeline requires only VLM inference calls with no GPU training.
At measured token usage of 778 input + 278 output tokens per GPT-4o call (\$2.50\,/\,\$10.00 per 1M tokens), each call costs \$0.0047.
The primary method (Mode~4, Observer--Debater--Judge) requires 3 calls per pair: for our 4,819-pair experimental dataset across six categories, this totals 14,457 calls (\textbf{\$68}).
The full 4-mode ablation costs \$274 (57,828 calls); end-to-end dimension optimization adds \$55 (${\sim}$11,700 calls), for a total experimental budget of \textbf{${\sim}$\$329}.
At production scale with the full Place Pulse~2.0 filtered dataset (${\sim}$164K pairs, Mode~4 only), projected cost is ${\sim}$\textbf{\$2,300}, representing a \textbf{98.6\%} reduction compared to traditional crowd\-sourcing at comparable scale (${\sim}$\$167K for 1.17M pairwise annotations at \$0.14/comparison~\citep{salesses2013collaborative}).

\smallskip\noindent\textbf{Wall-clock time.}
Stage~1 (consensus sampling + dimension extraction) completes in $<$2\,min per category (6 VLM calls total across all categories).
Stage~2 is the bottleneck: Mode~4 issues 3 sequential VLM calls per pair (Observer, Debater, Judge); with measured latency of ${\sim}$3\,s/call, the 4,819 experimental pairs require ${\sim}$12\,h sequentially.
In practice, categories are processed in parallel and calls are batched, reducing end-to-end wall time to ${\sim}$2--4\,h on a single machine with concurrent API access.
Stage~3 (LWRR) is purely CPU-based and completes in $<$30\,s for all six categories.
The E2E dimension search (12 trials) adds ${\sim}$6\,h of Stage~2 scoring (20\% subsample, Mode~2); Stage~3 re-evaluation per trial is negligible.
No GPU is required at any stage.

\section{Statistical Reliability Analysis}
\label{sec:reliability}

\subsection{Bootstrap Confidence Intervals}

\Cref{tab:bootstrap} reports 1,000-iteration bootstrap 95\% confidence intervals for UrbanAlign (Mode~4, post-LWRR) on each category's test split (excluding ``equal'' labels).
The intervals reflect moderate test-set sizes (83--260 pairs per category), yet all lower bounds exceed the 56.7\% zero-shot VLM baseline.

\begin{table}[H]
\centering
\caption{Bootstrap 95\% confidence intervals (1,000 iterations, excluding ``equal'' labels).}
\label{tab:bootstrap}
\small
\begin{tabular}{lrccc}
\toprule
\textbf{Category} & $n$ & \textbf{Acc.\ (\%)} & \textbf{95\% CI} & $\boldsymbol{\kappa}$ \textbf{95\% CI} \\
\midrule
Safety      & 49 & 81.8 & [69.4, 91.8] & [0.41, 0.83] \\
Beautiful   & 43 & 70.0 & [55.8, 83.7] & [0.11, 0.64] \\
Lively      & 49 & 69.4 & [55.1, 81.6] & [0.15, 0.64] \\
Wealthy     & 50 & 74.2 & [62.0, 86.0] & [0.24, 0.72] \\
Boring      & 47 & 70.6 & [57.4, 83.0] & [0.17, 0.66] \\
Depressing  & 44 & 68.5 & [54.5, 81.8] & [0.11, 0.64] \\
\midrule
Average     & 282 & 72.4 & --- & --- \\
\bottomrule
\end{tabular}
\end{table}

\subsection{Human Inter-Annotator Agreement Ceiling}

To contextualise model accuracy, we estimate the human agreement ceiling via split-half reliability on the Place Pulse~2.0 raw votes (1.56M individual votes).
For each image pair with $\geq 2$ votes, we randomly split annotators into two groups, compute each group's majority label, and measure agreement.
We repeat this process 100 times and report the mean.

\begin{table}[H]
\centering
\caption{Human split-half agreement ceiling (100 random splits) compared to UrbanAlign accuracy.}
\label{tab:human_agreement}
\small
\begin{tabular}{lrcccc}
\toprule
\textbf{Category} & \textbf{Pairs} & \textbf{Human Agr.\ (\%)} & \textbf{Human $\kappa$} & \textbf{UrbanAlign (\%)} & \textbf{Gap (pp)} \\
\midrule
Safety      & 1,950 & 81.6 & 0.674 & 81.8 & $+$0.2 \\
Beautiful   &   736 & 82.9 & 0.709 & 70.0 & $-$12.9 \\
Lively      & 1,675 & 78.4 & 0.633 & 69.4 & $-$9.0 \\
Wealthy     &   682 & 85.7 & 0.746 & 74.2 & $-$11.5 \\
Boring      &   538 & 85.5 & 0.744 & 70.6 & $-$14.9 \\
Depressing  &   542 & 86.7 & 0.755 & 68.5 & $-$18.2 \\
\midrule
Average     & --- & 83.5 & 0.710 & 72.4 & $-$11.1 \\
\bottomrule
\end{tabular}
\end{table}

On \emph{safety}, UrbanAlign matches the human ceiling (81.8\% vs.\ 81.6\%), suggesting the category is fully solved within the noise margin.
The remaining 9--18\,pp gap on other categories indicates room for improvement, likely through richer dimension sets, larger reference pools, and stronger VLM backbones.
Notably, the human ceiling itself is not 100\% because Place Pulse comparisons are inherently subjective---different annotators genuinely disagree on which of two images is more boring or depressing.

\bibliographystyle{unsrtnat}
\bibliography{references}

\end{document}